\newtheorem{definition}{Definition}
\newtheorem{lemma}{Lemma}
\newtheorem{assumption}{Assumption}
\newtheorem{theorem}{Theorem}
\newtheorem{remark}{Remark}
\newcommand*{\iso}[1]{{\hbox{$\left#1\vbox to7.5\p@{}\right.\n@space$}}}
\begin{document}

\title{Force-Sensor-Less Bilateral Teleoperation Control of Dissimilar Master-Slave System with Arbitrary Scaling}

\author{Santeri Lampinen$^1$\footnote{Address all correspondence to this author; santeri.lampinen@tuni.fi} \and Janne Koivumäki$^1$ \and Wen-Hong Zhu$^2$ \and Jouni Mattila$^1$}
	
\date{\footnotesize{$^1$Faculty of Engineering and Natural Sciences, Tampere University, Finland}\\
	  \footnotesize{$^2$Canadian Space Agency,  Longueuil (St-Hubert), QC, Canada}}

\maketitle

\begin{abstract}
	This study designs a high-precision bilateral teleoperation control for a dissimilar master-slave system. The proposed nonlinear control design takes advantage of a novel subsystem-dynamics-based control method that allows designing of individual (decentralized) model-based controllers for the manipulators locally at the subsystem level. Very importantly, a dynamic model of the human operator is incorporated into the control of the master manipulator. The individual controllers for the dissimilar master and slave manipulators are connected in a specific communication channel for the bilateral teleoperation to function. Stability of the overall control design is rigorously guaranteed with arbitrary time delays. Novel features of this study include the completely force-sensor-less design for the teleoperation system with a solution for a uniquely introduced computational algebraic loop, a method of estimating the exogenous operating force of an operator, and the use of a commercial haptic manipulator. Most importantly, we conduct experiments on a dissimilar system in 2 degrees of freedom (DOF). As an illustration of the performance of the proposed system, a force scaling factor of up to 800 and position scaling factor of up to 4 was used in the experiments. The experimental results show an exceptional tracking performance, verifying the real-world performance of the proposed concept.
\end{abstract}

%\keywords{Contact force estimation, motion/force control, nonlinear control, stability, teleoperation, telerobotics}

\pagebreak
	\nomenclature[1]			{$(\cdot)_\gamma $}									{Subscript indicating whether the attribute refers to the master ($ \gamma = m $) or the slave~($\gamma=s$).}
	\nomenclature[2Theta]		{$ \Theta_{m} \in \mathbb{R}^{30\times2}$}			{Mapping matrix.}
	\nomenclature[1q]			{$ \dot{\mathbf{q}}_m  \in \mathbb{R}^{2} $}		{Independent joint velocity coordinates of the master manipulator.}
	\nomenclature[1Vm]			{$ \mathcal{ V }_\gamma  \in \mathbb{R}^{2}$}		{Velocity of the master/slave manipulator.}
	\nomenclature[1Vmr]			{$ \mathcal{ V }_{\gamma r}  \in \mathbb{R}^{2}$}	{Required counterpart of $ \mathcal{ V }_\gamma $.}
	\nomenclature[1Vmd]			{$ \mathcal{ V }_{\gamma d}  \in \mathbb{R}^{2}$}	{Desired  counterpart of $ \mathcal{ V }_\gamma $.}
	\nomenclature[1Pm]			{$ \mathcal{ P }_\gamma  \in \mathbb{R}^{2}$}		{Control point position of the manipulator.}
	\nomenclature[1Jm]			{$ \mathbf{J}_m \in \mathbb{R}^{2\times2} $}		{Jacobian matrix.}
	\nomenclature[2Phi]			{$ \Phi_m  \in \mathbb{R}^{30\times2} $}			{Mapping matrix.} 	
	\nomenclature[1Mm]			{$ M_m^* \hspace{-0.1cm} \in \hspace{-0.05cm} \mathbb{R}^{30\times30} $}				{Equivalent inertial matrix.}
	\nomenclature[1Cm]			{$ C_m^* \hspace{-0.1cm} \in \hspace{-0.05cm} \mathbb{R}^{30\times30} $}				{Skew-symmetric matrix of the centrifugal and Coriolis terms.}
	\nomenclature[1Gm]			{$ G_m^* \in \mathbb{R}^{30} $}						{Gravitation vector.}
	\nomenclature[2taum]		{$ \boldsymbol{\tau}_m \in \mathbb{R}^{2}$}			{Applied torques of the master manipulator.}
	\nomenclature[2taumm]		{$ \boldsymbol{\tau}_{mm} \in \mathbb{R}^{2}$}		{Estimated dynamics of the master manipulator.}
	\nomenclature[1fgamma]		{$ \mathbf{ f }_\gamma \in \mathbb{R}^{2} $}		{Contact force of the master/slave manipulator.}
	\nomenclature[1A]			{$ \mathbf{A} \in \mathbb{R}^{2\times2} $}			{Diagonal positive-definite matrix defining the gain of the force-feedback.}
	\nomenclature[1C]			{$ \mathbf{C} \in \mathbb{R}^{2\times2} $}			{Diagonal positive-definite matrix defining the time-constant of the first-order filter.}
	\nomenclature[2Lambda]		{$ \mathbf{\Lambda} \in\mathbb{R}^{2\times2}$}		{Diagonal positive-definite matrix defining the gain of the position-feedback.}
	\nomenclature[1Km]			{$ K_m \in \mathbb{R}^{2\times2} $}					{Diagonal positive-definite matrix defining the gain of the internal velocity-feedback.}
	\nomenclature[2kappa f]		{$ \kappa_{ f } \in \mathbb{R} $}					{Force scaling factor.}
	\nomenclature[2kappa p]		{$ \kappa_{ p } \in \mathbb{R} $}					{Position scaling factor.}
	\nomenclature[1T]			{$ T \in \mathbb{R} $}								{Length of the one-way time delay.}
	\nomenclature[2sigma]		{$ \sigma_f $}										{Selective factor to detect contact motion.}

\printnomenclature[1.7cm]
{\small 
	Tilde (${}^{\sim}$) on top of a variable implies that the variable is filtered with a first-order filter, unless explicitly specified otherwise. Hat ($\hat{\phantom{a}}$)}~on~top of a variable implies that the variable is an estimate of itself.

\section{Introduction}
\label{sec:intro}

Bilaterally teleoperated robotic systems can bring~the perception and precision of direct manipulation~into~challenging and risk-intensive tasks in environments that may be hazardous or hostile for humans. In contrast to unilateral teleoperation where the command flow goes only from the master to the slave, bilateral teleoperation provides the operator with information about the slave manipulator in the form of force feedback, to assist in the coordination and decision-making processes. To broaden the application scope of teleoperation, arbitrary motion and forces scaling has been pursued by many researchers, but no rigorously stability guaranteed method have been shown to work in a multi-DOF system.

Currently, one of the most interesting applications for teleoperation lies in Learning from Demonstrations (LdD) applications with heavy-duty manipulators. LfD is an established technique in robotics, where a robot is taught to perform tasks by demonstrations from a human teacher. The robot can then repeat these tasks in even slightly varying conditions \cite{argall2009survey}. The \textit{key enabler} for LfD applications with \textbf{heavy-duty manipulators} is teleoperation of asymmetric systems with~\textit{motion} and \textit{force scaling}. Conventional kinesthetic teaching methods, an established method for providing teaching samples, cannot be applied for such manipulators due to the size and force limitations (workspace over 2 m and payload over 500~kg) \cite{Suomalainen2018}. Instead, teaching samples can be captured using teleoperation with motion and force scaling between~the manipulators. Teleoperation has the advantage of an intuitive and efficient communication and operation strategy between humans and robots. Teleoperated demonstrations have~been successfully used for LfD applications with promising results using 1:800 force scaling in \cite{Suomalainen2018} and using 1:1 scaling in \cite{PervezRyuLfD}, and \cite{LfDSurvey2018}.

In applications where heavy objects are handled or a~great amount of force is required, hydraulic actuation has remained the most attractive solution due to its great power-to-weight ratio. Hydraulic actuators further have the benefits of simplicity, robustness, and low cost. However, control of such actuators is significantly challenged by their complex nonlinear dynamic behavior. When the actuators are used in articulated systems, the control design is further complicated by the associated nonlinear multi-body dynamics, and the overall dynamics can be described by coupled nonlinear third-order differential equations. Consequently, the constrained motion control of multiple degrees-of-freedom (\textit{n}-DOF) hydraulic robotic manipulators has been a well-recognized challenge \cite{Mattila2017}. 

As an additional challenge to the above, contact~force~measurements are often required for contact control. In conventional applications, a 6-DOF force/torque sensor is often attached to the tip of the manipulator for this purpose. However, these force/torque sensors are expensive and prone to overloading and shocks, a situation frequently occurring with hydraulic heavy-duty manipulators \cite{Koivumaki_TRO2015}. Therefore, methods avoiding direct contact force measurements have become~desirable.

Needless to say, teleoperation of hydraulic manipulators has been an extremely difficult problem due to unresolved challenges in their high-precision control \cite{HOKAYEM20062035, OSTOJASTARZEWSKI1989, Salcudean1999, Tafazoli2002}. However, due to recent advances in hydraulic manipulators' high-precision control and leaps in the state-of-the-art (see \cite{Mattila2017, KoivumakiTMECH2017,Koivumaki_TRO2015,KoivumakiCEP2019}), teleoperation of hydraulic manipulators is suddenly becoming a feasible and interesting field of study again. Moreover, \textit{time delay}, a focused research topic, especially in the teleoperation of extraterrestrial systems\cite{ZhaiTeleop2018}, can be alternatively addressed for terrestrial applications in the advent of 5G cellular networks with ultra-low latencies \cite{Aijaz2017}. Terrestrial applications are within the author's main scope. %\\Filler%\\Filler\\Filler 

In this paper, we target an asymmetric bilateral teleoperation system comprised of a commercial haptic master manipulator and a hydraulic slave manipulator. The system has notable asymmetry between the manipulators due to substantial differences between the dynamics of the master and slave manipulators. Due to this asymmetry, handling motion and force scaling in the teleoperation architecture becomes necessary. The current state-of-the-art in teleoperation control has been focusing on purely electrical manipulators in symmetrical configurations, including multi-master or multi-slave setups, shared control, or dealing with time delays \cite{Guo2018, MalyszSirouspour2, Sirouspour2009, Sirouspour2005}. The existing methods for teleoperation of hydraulic manipulators have mainly relied on linear control theory and system linearization \cite{OSTOJASTARZEWSKI1989, Salcudean1999, Tafazoli2002, Muhammad2007}. However, these methods have limitations in teleoperation of complex, highly nonlinear, and asymmetric systems. In contrast, the adaptive teleoperation scheme proposed by Zhu and Salcudean in \cite{Zhu2000} was reported to be capable of addressing nonlinear dynamics of asymmetric master and slave manipulators with arbitrary motion and force scaling. However, experiments with only a 1-DOF symmetrical system were presented. Moreover, both manipulators were equipped with force sensors.

In the present study, the results of \cite{Zhu2000} and \cite{LampinenCASE} are used as the foundation for designing high-precision bilateral teleoperation control for significantly asymmetric systems. In \cite{LampinenCASE}, preliminary attempts for full-dynamics-based (and high-precision) bilateral teleoperation for an asymmetric hydraulic/electric system were demonstrated, while in \cite{LampinenFPMC2018} artificial constraints in the task space were implemented. However, sufficient stability analysis and theoretical discussions were not included. 

To improve the preliminary theory and control performance reported in \cite{LampinenCASE}, the following \textit{distinguishable contributions} are demonstrated in the present study.   
\textbf{1)} We propose a master manipulator contact force estimation by using joint control torques and estimated manipulator dynamics. A solution~to~a computational algebraic loop formed around the actuation and force estimation is proposed. 
\textbf{2)} We propose a novel method for estimating the exogenous force of the human~operator.
\textbf{3)} Stability of the overall control design~is rigorously guaranteed with robustness against an arbitrary time delay.

With the control theoretical developments described above, the experiments demonstrate significant improvements in relation to our preliminary study \cite{LampinenCASE}. The experiments with a 2-DOF system with a force scaling ratio of up to 800 and a position scaling ratio of up to 4, in lieu of the 1-DOF experiments~in \cite{Zhu2000}, serve\textit{ a critical step toward practical 6-DOF applications}.

The rest of this paper is organized as follows: Section \ref{sec:Math} presents the mathematical preliminaries. Section \ref{sec:Master} discusses control of the master manipulator, while Section \ref{sec:Slave} discusses control of the slave manipulator. Section \ref{sec:teleoperation} presents the teleoperation scheme and discusses properties of the teleoperation method. Section \ref{sec:Experiments} presents the experimental system and results. Finally, conclusions are drawn in Section~\ref{sec:Conclusions}.

\section{Mathematical Preliminaries}
\label{sec:Math}

Let an orthogonal coordinate system (i.e., a frame) $\{\mathbf{A}\}$~be attached to a rigid body. Then, the linear/angular velocity~vector $^{\mathbf{A}}{V} \in \mathbb{R}^6$ and the force/moment vector $^{\mathbf{A}}{F} \in \mathbb{R}^6$ of the rigid body, expressed in frame $ \{\mathbf{A}\} $, can be expressed as \cite{Zhu2010Virtual}:
\begin{equation*}
	^{\mathbf{A}}{V} = 
	\begin{bmatrix}
	^{\mathbf{A}}\mathbf{v} & {}^{\mathbf{A}}\boldsymbol{\omega}
	\end{bmatrix}^T,
	\hspace{0,1cm}
	^{\mathbf{A}}{F} = 
	\begin{bmatrix}
	^{\mathbf{A}}\mathbf{f} & {}^{\mathbf{A}}\mathbf{m}
	\end{bmatrix}^T
\end{equation*}
where $ ^{\mathbf{A}}\mathbf{v} \in \mathbb{R}^3 $ and $ ^{\mathbf{A}}\boldsymbol{\omega} \in \mathbb{R}^3 $ are the linear and angular velocity vectors of frame~$ \{\mathbf{A}\} $, expressed in frame~$ \{\mathbf{A}\} $, and $ ^{\mathbf{A}}\mathbf{f} \in \mathbb{R}^3 $ and $ ^{\mathbf{A}}\mathbf{m} \in \mathbb{R}^3 $ are the force and moment vectors that are being measured and expressed in frame $ \{\mathbf{A}\} $. 

Transformation of linear/angular and force/moment vectors between two frames, attached to a common rigid body, namely $ \{\mathbf{A}\} $ and $ \{\mathbf{B}\} $, can be expressed as \cite{Zhu2010Virtual}:
\begin{gather}
	\label{eq:velocitytransformation}
	^{\mathbf{B}}{{V}} = {}^{\mathbf{A}}\mathbf{U}_{\mathbf{B}}^{T}\,{}^{\mathbf{A}}{{V}}\\
	\label{eq:forcetransformation}
	^{\mathbf{A}}{{F}} = {}^{\mathbf{A}}\mathbf{U}_{\mathbf{B}}\,{}^{\mathbf{B}}{{F}}
\end{gather}
where $^{\mathbf{A}}\mathbf{U}_{\mathbf{B}} \in \mathbb{R}^{6\times6}$ is a force/moment transformation~matrix that also transform velocities between frames $\{\bf{A}\}$~and~$\{\bf{B}\}$.

The dynamics of a freely moving rigid body, expressed in the fixed rigid body frame $ \{\mathbf{A}\} $, can be defined as
\begin{equation}
	\label{eq:rigidbodydynamics}
	\mathbf{M_A}\frac{d}{dt}(^{\mathbf{A}}\mathbf{{V}})+\mathbf{C_A}(^{\mathbf{A}}\omega)^{\mathbf{A}}\mathbf{{V}}+\mathbf{G_A} = {}^{\mathbf{A}}{{F}}^{*}
\end{equation}
where $\mathbf{M_A} \in \mathbb{R}^{6\times6} $ is the mass matrix, $\mathbf{C_A} \in \mathbb{R}^{6\times6} $ the Coriolis and centrifugal terms, and $\mathbf{G_A \in \mathbb{R}^{6}}$ the gravity vector of the rigid body. For a detailed formulation of $\mathbf{M_A} $, $\mathbf{C_A} $ and $\mathbf{G_A} $, readers are referred to \cite{Zhu2010Virtual}.

\section{The Master Manipulator}
\label{sec:Master}
Phantom premium 3.0/6DOF, a commercial haptic manipulator without any modifications to the hardware, has been chosen to act as the master manipulator in this study. It possesses 6-DOF manipulability and force-feedback along each individual DOF, with a workspace mimicking human arm motion pivoting from the shoulder. For this study, we developed a new control system for the manipulator to rigorously address the dynamics of the lightweight manipulator. As a challenge to the control design, the manipulator lacks force/torque sensors. Therefore, human operator contact force estimation is required. Without loss of generality, we consider manipulation and force perception only within a specific 2-DOF plane (by using joint 2 and joint 3), while the rest of the DOFs are locked with software. Benefits of using a commercial haptic device as the master manipulator is the ease of implementation for wide range of applications. 

Frame $ \{ \mathbf{S_{tcp}} \} $ is assigned to the tool center point (TCP)~of the master manipulator (see Fig.~\ref{fig:FrameAssignment}) and the external forces~resulting from the dynamics of the human operator as well~as the exogenous operating hand force are estimated and expressed~in this frame. Orientation of frame $ \{\mathbf{S_{tcp}}\} $ is aligned with handle of the master manipulator and the handle is held horizontal as shown in Fig.~\ref{fig:FrameAssignment} by position control at the spherical wrist.

\subsection{Concept of the Virtual Decomposition Control}
\label{sec:VDC_Concept}
To design the intended high-precision teleoperation~for~complex asymmetric system, the study takes advantage of a novel virtual decomposition control~(VDC)~approach (see \cite{Zhu2010Virtual,Zhu1997}). The method is developed especially~for~controlling complex robotic systems, with a number of significant state-of-the-art control performance improvements with robotic systems (see, e.g., \cite{Zhu2000,Zhu2013,Koivumaki_TRO2015,Mattila2017,KoivumakiTMECH2017,LampinenCASE,KoivumakiCEP2019}). As a key feature, VDC enables to \textit{virtually} break down complexity of the original system to a set of manageable \textit{modular subsystems} \cite{Zhu2010Virtual,Zhu2013} such that the control design and stability analysis can be performed \textit{locally at the subsystem level} without imposing additional approximations. This allows, e.g., that \textit{changing the control (or dynamics) of a subsystem does not affect the control equations of the rest of the system}~\cite{Zhu2010Virtual}. 

The subsystem-dynamics-based control design philosophy in VDC originates from two unique concepts, namely \textit{virtual stability} and \textit{virtual power flows} (VPFs); see Appendix~\ref{App:Virtual_Stab}. The VPFs uniquely define the dynamic interactions among the subsystems such that the \textit{virtual stability} of every subsystem ensures that a positive VPF is connected to its corresponding negative VPF in the adjacent subsystem (and vice versa). Thus, when every subsystem qualify as \textit{virtually stable}, all the VPFs cancel each other out, eventually, leading to the stability of the entire system in the sense of Lebesgue integrable functions (see Appendix~\ref{App:Lebesgue_Stab}). For more detailed information and additional benefits of VDC, see \cite{Zhu2010Virtual,Mattila2017}.

%%%%%%%%%%%%%%%% begin figure %%%%%%%%%%%%%%%%%%%
\begin{figure}[t]
	\begin{center}
		\includegraphics[width=0.65\columnwidth]{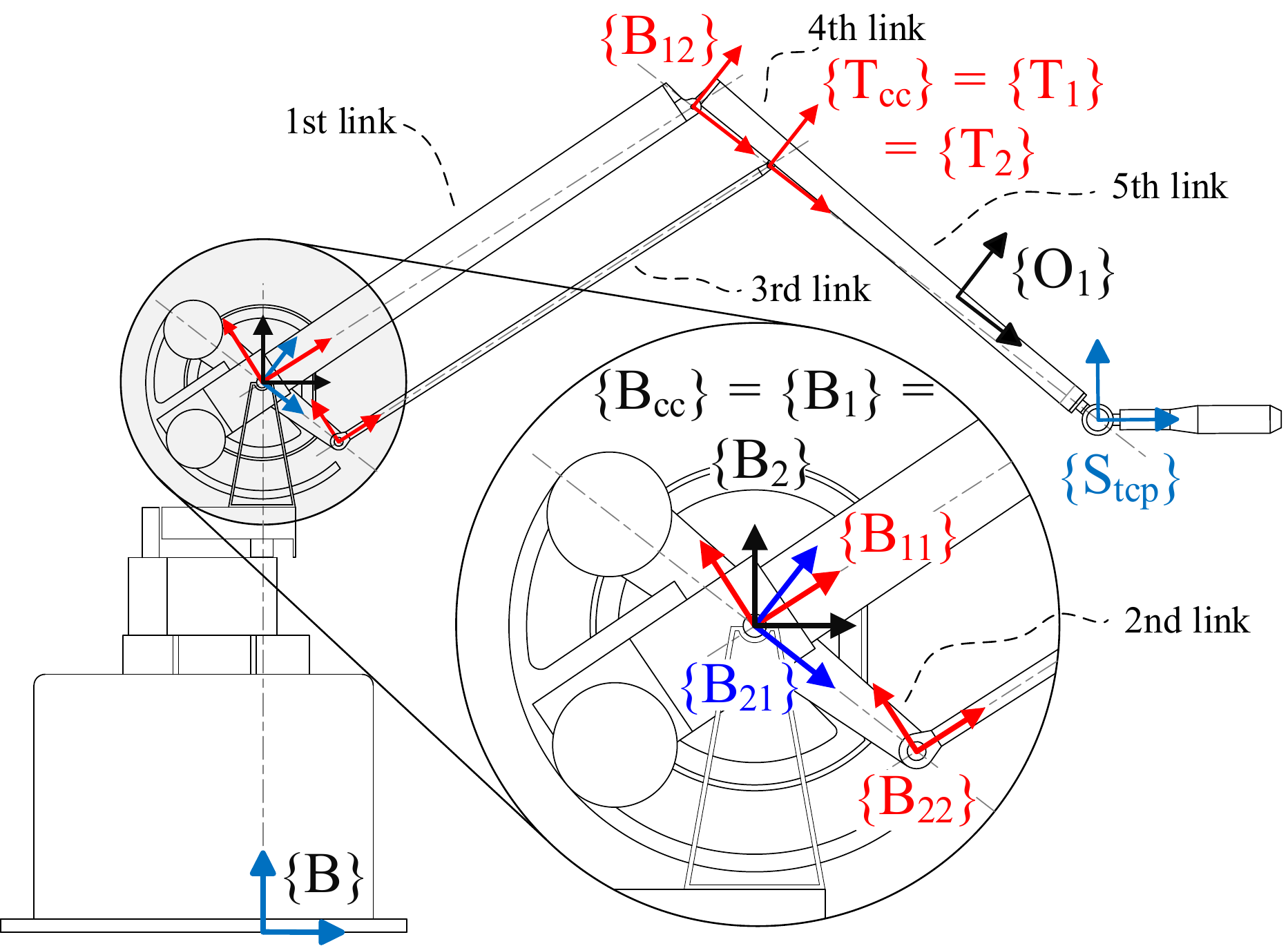}
	\end{center}
	\caption{Frame assignment of the haptic manipulator.}
	\label{fig:FrameAssignment} 
\end{figure}
%%%%%%%%%%%%%%%% end figure %%%%%%%%%%%%%%%%%%%%%

%%%%%%%%%%%%%%%% begin figure %%%%%%%%%%%%%%%%%%%
\begin{figure}[b]
	\begin{center}
		\includegraphics[width=0.5\columnwidth]{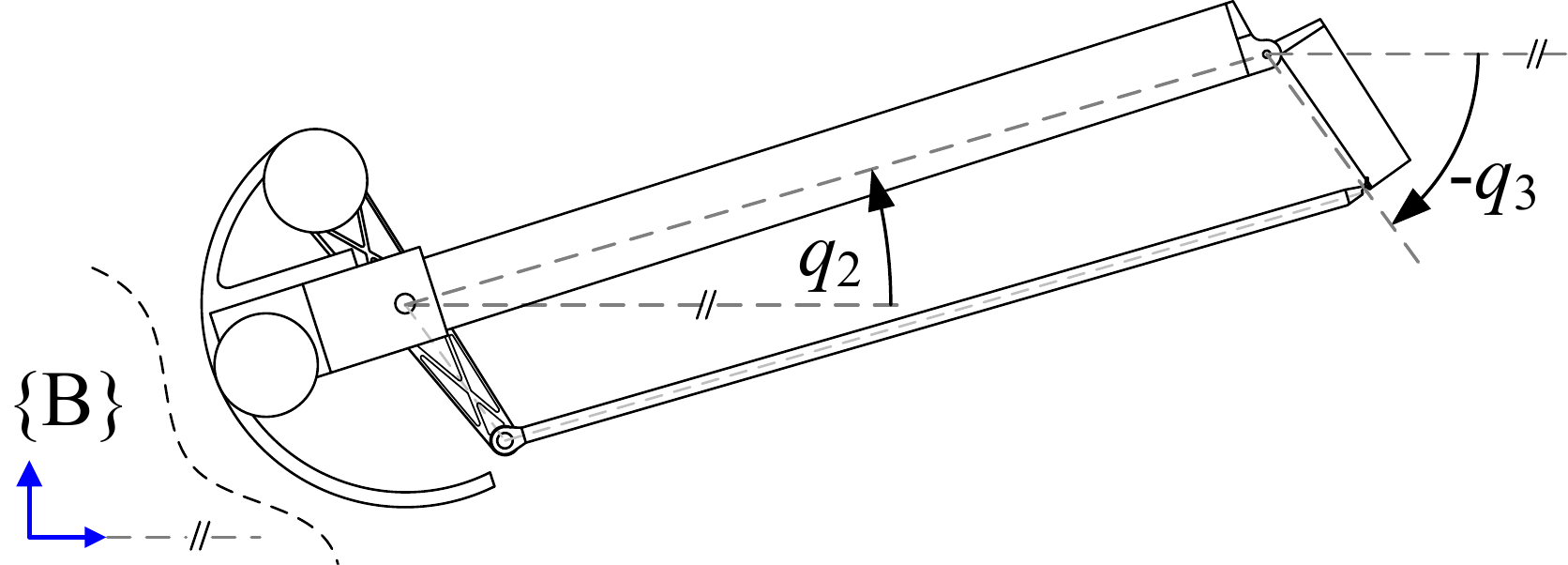}
	\end{center}
	\caption{Illustration of the joint angles of the master manipulator.}
	\label{fig:closedchain1} 
\end{figure}
%%%%%%%%%%%%%%%% end figure %%%%%%%%%%%%%%%%%%%%%

\subsection{Kinematics}

Relevant coordinate frames in terms of control of the master manipulator are shown in Fig.~\ref{fig:FrameAssignment}. Notably, frame $ \{\mathbf{B}_{11} \} $ is attached to the first link of the manipulator, $ \{\mathbf{B}_{21} \} $ to the second link, $ \{\mathbf{B}_{22} \} $ to the third link, $ \{\mathbf{B}_{12} \} $ to the fourth link, and $ \{\mathbf{O}_{1} \} $ to the fifth link. Numbering of the manipulator links is defined in Fig.~\ref{fig:FrameAssignment}.

\begin{remark}
	Link 4 and 5 are virtually cut from the same rigid body, using design principles of the VDC approach, which allows separate computations later.
\end{remark}

The independent joint velocity coordinates are denoted as 
\begin{equation}
\dot{\mathbf{q}}_m = 
\begin{bmatrix}
\dot{q}_{2} & \dot{q}_3
\end{bmatrix}^T \in \mathbb{R}^2 .
\end{equation}
The respective joint angles ${q}_{2}$ and ${q}_{3}$ are shown in Fig.~\ref{fig:closedchain1}. Then, velocities of all links of the master manipulator can~be determined using the geometrical transformation matrices between each frame with the independent joint velocity coordinates as
\begin{equation}
\label{eq:kinematicsV2}
\mathbf{V}_m
=
\Theta_m \dot{\mathbf{q}}_m
\end{equation}
where $ \mathbf{ V }_m = 
\left[\begin{smallmatrix}
^{\mathbf{B_{11}}}{{V}}^T &
^{\mathbf{B_{12}}}{{V}}^T &
^{\mathbf{B_{21}}}{{V}}^T &
^{\mathbf{B_{22}}}{{V}}^T &
^{\mathbf{O_{1}}}{{V}}^T
\end{smallmatrix}\right]^T \in \mathbb{R}^{30} $, and $ \Theta_{m} \in \mathbb{R}^{30\times2}$ is a mapping matrix defined as
\begin{align}
\label{eq:phi1}
&\Theta_m \hspace{-0.1cm} = \hspace{-0.1cm}%\\ \nonumber
\left[
\begin{smallmatrix}
\mathbf{z}	&	\boldsymbol{0}_{6\times1}	\\%[0.25em]
^{\mathbf{B_{11}}}\mathbf{U}_{\mathbf{B_{12}}}^{T}\mathbf{z} - \mathbf{z}	&	\mathbf{z}	\\%[0.25em] 
\boldsymbol{0}_{6\times1}	&	\mathbf{z}	\\%[0.25em]
\mathbf{z}	& ^{\mathbf{B_{21}}}\mathbf{U}_{\mathbf{B_{22}}}^{T}\mathbf{z} - \mathbf{z}\\%[0.25em] 
^{\mathbf{T_{cc}}}\mathbf{U}_{\mathbf{O_{1}}}^{T}{}^{\mathbf{B_{12}}}\mathbf{U}_{\mathbf{T_{cc}}}^{T}\left( ^{\mathbf{B_{11}}}\mathbf{U}_{\mathbf{B_{12}}}^{T}\mathbf{z} - \mathbf{z} \right) \,	& ^{\mathbf{T_{cc}}}\mathbf{U}_{\mathbf{O_{1}}}^{T}{}^{\mathbf{B_{12}}}\mathbf{U}_{\mathbf{T_{cc}}}^{T}\mathbf{z}	
\end{smallmatrix}
\right] \hspace{-0.1cm}
\end{align}
where $ \mathbf{z}  = \left[\begin{smallmatrix} 0 & 0 & 0 & 0 & 0& 1 \end{smallmatrix}\right]^T \in \mathbb{R}^6 $.

Let the independent velocity coordinates at master manipulator's handle be $  \mathcal{V}_m \in \mathbb{R}^2 $ subject to  
\begin{align}
	\mathcal{V}_m &= \mathbf{J}_m \dot{\mathbf{q}}_m \\
	\mathbf{J}_m &= 
		\left[
			\begin{smallmatrix}
				\boldsymbol{I}_{2\times 2} &
				\boldsymbol{0}_{2\times 4}
			\end{smallmatrix}
		\right]
	{}^{\mathbf{O_{1}}}\mathbf{U}_{\mathbf{S_{tcp}}}^{T}
	\left[
		\begin{smallmatrix}
			\boldsymbol{0}_{6\times 24}&\boldsymbol{I}_{6\times 6}
		\end{smallmatrix}
	\right]
	\Theta_m
\end{align}
where $ \mathbf{J}_m \in \mathbb{R}^{2\times2}$ is the invertible Jacobian matrix of the master manipulator.

Then, another mapping matrix can be defined as
\begin{equation}
	\Phi_m = \Theta_{m}\mathbf{J}_m^{-1}.
\end{equation}

\subsection{Dynamics}
\label{sec:dynamics}

Dynamics of each rigid body of the master manipulator can be determined using \eqref{eq:rigidbodydynamics} and \eqref{eq:kinematicsV2}. According to \cite{Zhu2008dynamics}, and under Assumption~\ref{ass:friction}, dynamic model of the master manipulator can be expressed as
\begin{align}
		&\Phi_m^T \mathcal{M}_m^* \frac{d}{dt}\left(\Phi_m\mathcal{V}_m\right) + \left(\Phi_m^T \mathcal{C}_m^* \Phi_m\right) \mathcal{V}_m + \Phi_m^T \mathcal{G}_m^*
		\label{eq:dynamicsofMaster}
		= \mathbf{J}_m^{-T}\boldsymbol{\tau}_m - \mathbf{f}_m
\end{align}
where $ \mathbf{f}_m $ is the net reaction force from the master manipulator toward the human operator and will be defined later in more detail, $ \boldsymbol{\tau}_m \in \mathbb{R}^2 $ denotes the applied torques of the manipulator, and
\begin{align}
	\label{eq:Mm*}
	\mathcal{M}_m^* &= {\rm diag} \left\{\mathbf{M}_{\mathbf{B_{11}}}, \ {}\mathbf{M}_{\mathbf{B_{12}}}, \ {}\mathbf{M}_{\mathbf{B_{21}}}, \ {}\mathbf{M}_{\mathbf{B_{22}}}, \	{}\mathbf{M}_{\mathbf{O_{1}}}\right\}\\
	\mathcal{C}_m^* &= {\rm diag} \left\{\mathbf{C}_{\mathbf{B_{11}}}, \ {}\mathbf{C}_{\mathbf{B_{12}}}, \ {}\mathbf{C}_{\mathbf{B_{21}}}, \ {}\mathbf{C}_{\mathbf{B_{22}}}, \	{}\mathbf{C}_{\mathbf{O_{1}}}\right\}\\
	\mathcal{G}_m^* &= \left[\mathbf{G}_{\mathbf{B_{11}}}^T, \ {}\mathbf{G}_{\mathbf{B_{12}}}^T, \ {}\mathbf{G}_{\mathbf{B_{21}}}^T, \ {}\mathbf{G}_{\mathbf{B_{22}}}^T, \	{}\mathbf{G}_{\mathbf{O_{1}}}^T\right]^T.
	\label{eq:Gm*}
\end{align}

\begin{assumption}
	\label{ass:friction}
	Bearing friction of all the revolute joints of the master manipulator are zero.
\end{assumption}

\subsection{Human Operator}

Based on literature \cite{Kazerooni1994}; \cite{Zhu2000}, and \cite{cooke1979}, sufficient accuracy for modeling the human operator can be achieved using a simple second-order linear time-invariant model. The following model is used here
\begin{equation}
	\label{eq:LTImodel}
	\mathbf{M}_h \ddot{\mathbf{x}}_h + \mathbf{D}_h \dot{\mathbf{x}}_h + \mathbf{K}_h {\mathbf{x}}_h = \mathbf{f}_m - \mathbf{f}_h^*
\end{equation}
where $ \mathbf{M}_h \in \mathbb{R}^{2\times2} $, $ \mathbf{D}_h \in \mathbb{R}^{2\times2} $ and $ \mathbf{K}_h \in \mathbb{R}^{2\times2} $ are symmetric positive-definite matrices approximating the inertia, damping and stiffness of the arm of the human operator, respectively; while $ \mathbf{f}_m \in \mathbb{R}^2 $, appeared first in \eqref{eq:dynamicsofMaster}, denotes the net force vector, exerted by the master manipulator toward the operator, and $ \mathbf{f}_h^* \in \mathbb{R}^2 $ denotes the exogenous force vector actively generated by the operator. The position of the arm of the operator is denoted by $ \mathbf{x}_h \in \mathbb{R}^2 $, while $ \dot{\mathbf{x}}_h \in \mathbb{R}^2 $ and $ \ddot{\mathbf{x}}_h \in \mathbb{R}^2 $ denote the first and second time-derivatives of the position vector, respectively, subject to
\begin{equation}
	\label{eq:x_hDef}
	\dot{\mathbf{x}}_h = \mathcal{V}_m.
\end{equation}

In \cite{MalyszSirouspour2} and \cite{MalyszSirouspour}, it was suggested that the exogenous force of the operator could be estimated using a fast parameter adaptation function. This differs from the approach in \cite{Zhu2000}, where a switching term with a constant force, instead of an estimate, was used to ensure stability. The precise expression of the exogenous force, denoted $ \mathbf{{{f}}}_h^{*} \in \mathbb{R}^2 $ in \eqref{eq:LTImodel}, would necessarily involve research on complex human motor neuron actions. In this paper, we describe this exogenous force as a general linear-in-parameter form as

\begin{equation}
\label{eq:exogenousForceDynamics}
\mathbf{{{f}}}_h^{*} = \Psi(t)\mathbf{p}
\end{equation}
where $ \Psi(t) $ is a time-variant matrix and $ \mathbf{p} $ is a parameter vector. We treat vector $ \mathbf{p} $ as constant by moving all time-variant properties into $ \Psi(t) $.

\begin{remark}
	Note that expression \eqref{eq:exogenousForceDynamics} is quite general. It covers the expressions used in \cite{MalyszSirouspour2} and \cite{Sirouspour2009}, in which $ \Psi(t) = 1 $~are used. Most importantly, this expression takes the same form commonly used in neural networks, allowing flexible incorporation of basis radial functions into machine learning mechanisms. With more elegant design of $ \Psi(t) $, for example, muscle activation measured by electromyography could be used for the intent force modeling.
\end{remark}

\subsection{Control of the Master Manipulator with a Human Operator}

For accurate control of the master manipulator, dynamics of both the manipulator itself and the human operator need to be addressed together. The required control law must therefore define required contact force towards the human operator.

The estimated human operator exogenous force is written~as
\begin{equation}
\label{eq:estimateExogenousForceDynamics}
\hat{\mathbf{{{f}}}}_h^{*} = \Psi(t)\hat{\mathbf{p}}
\end{equation}
where $ \hat{\mathbf{p}} $ is an estimate of the parameter vector. The time-invariant parameters are estimated using the following parameter adaptation law as
\begin{equation}
	\label{eq:standardAdaptation}
	\begin{split}
	\dot{\hat{p}}_{i} &= \rho_{i} \, \kappa \, \Psi_{i}(t) \, s(t) \\
	s(t) &= \left(\mathcal{V}_{mr} - \mathcal{V}_{m}\right)\\
	\kappa &= \left\{\begin{matrix} 0, \quad {\hat{p}}_{i} \le {\hat{p}}_{i}^- \ \rm{and} \ s \le 0 \\
									0, \quad {\hat{p}}_{i} \ge {\hat{p}}_{i}^+ \ \rm{and} \ s \ge 0 \\
									1, \quad \rm{otherwise} \phantom{asdasdaa}	\end{matrix}	\right.
	\end{split}
\end{equation}
where $ \hat{p}_{i} $, $ {\hat{p}}_{i}^+ $ and $ {\hat{p}}_{i}^- $ are the estimate of the $ {i} $th element of the real time-invariant parameter vector $ \mathbf{p}=\left[p_1, \, p_2,\,...,\,p_{i}, \,...\right]^T $ as well as its upper and lower bounds, respectively, $ \rho_{i} $ is the adaptation gain of the $ {i} $th element of $ \hat{\mathbf{p}} =\left[\hat{p}_1, \, \hat{p}_2,\,...,\,\hat{p}_{i}, \,...\right]^T  $, $ \Psi_{i}(t) $ denotes the $ {i} $th column of the time-variant matrix $ \Psi(t) $, and $ \mathcal{V}_{mr} \in \mathbb{R}^2 $ denotes the required velocity at the tip of the master manipulator, expressed in frame $ \mathbf{\{S_{tcp}\}}$.

In Section~\ref{sec:dynamics}, dynamics were calculated using the measured independent joint velocity vector $ \dot{\mathbf{q}}_m $. However, since the proposed control method is velocity based, we need to define the required velocities in Cartesian space. Let $ \mathcal{V}_{md} \in \mathbb{R}^2 $ be the desired velocity of the tip of the master manipulator, to be defined later in Section~\ref{sec:teleoperation}. Then, the required velocity vector, $ \mathcal{V}_{mr} \in \mathbb{R}^2 $, is designed as
\begin{equation}
\label{eq:VmrDef}
\mathcal{V}_{mr} = \mathcal{V}_{md} - \mathbf{A}\tilde{{\mathbf{f}}}_m
\end{equation}
where $ \mathbf{A} \in \mathbb{R}^{2\times2} $ is a diagonal positive-definite gain matrix, and $ \tilde{{\mathbf{f}}}_m $ denotes a filtered estimate of the forces of the master manipulator to be determined later in this section. Compute
\begin{align}
	\label{eq:qmr}
	\dot{\mathbf{q}}_{mr} &= \mathbf{J}_m^{-1}\mathcal{V}_{mr} \\
	\label{eq:vmr}
	\mathbf{ V }_{mr} &= \Theta_m\dot{\mathbf{q}}_{mr}
\end{align}
where $ \dot{\mathbf{q}}_{mr} \in \mathbb{R}^2 $ denotes the required counterpart of $ \dot{\mathbf{q}}_{m} $ and $ \mathbf{V}_{mr} \in \mathbb{R}^{30} $ denotes the required counterpart of $ \mathbf{V}_{m} $.
\begin{remark}
	The second term in right hand side of \eqref{eq:VmrDef} acts as a local force feedback term within the control design.
\end{remark}

The linear parametrization of the required rigid body dynamics can be written according to \cite{Zhu2010Virtual} as
\begin{equation}
\label{eq:linearparametrization}
\mathbf{Y_A \theta_A} \equiv
\mathbf{M_A}\frac{d}{dt}(^{\mathbf{A}}V_{\rm r})+\mathbf{C_A}(^{\mathbf{A}}\omega)^{\mathbf{A}}V_{\rm r}+\mathbf{G_A}.
\end{equation}
Interested reader is referred to the formulation of the regressor matrix $\mathbf{Y_A} \in \mathbb{R}^{6\times13}$ and the parameter vector $\mathbf{\boldsymbol{\theta}_A} \in \mathbb{R}^{13}$ in \cite{Zhu2010Virtual}.

Using $ \mathbf{ A } \in \{\mathbf{B_{11}}, \mathbf{B_{12}}, \mathbf{B_{21}}, \mathbf{B_{22}}, \mathbf{O_{1}}\} $, dynamics of each rigid body can be calculated with \eqref{eq:linearparametrization} as
\begin{align}
\label{eq:dynamicsofmasterprecalc}
\mathbf{Y}_m \boldsymbol{\theta}_m = & \nonumber
\left[ 
\left(\mathbf{Y_{B_{11}} \theta_{B_{11}}}\right)^T, \ \left(\mathbf{Y_{B_{12}} \theta_{B_{12}}}\right)^T, \ \left(\mathbf{Y_{B_{21}} \theta_{B_{21}}}\right)^T, \right. \\ 
&\ \,  \left(\mathbf{Y_{B_{22}} \theta_{B_{22}}}\right)^T, \ \left.\left(\mathbf{Y_{O_{1}} \theta_{O_{1}}}\right)^T \right]^T \in \mathbb{R}^{30}.
%&\ \,  \left.\left(\mathbf{Y_{O_{1}} \theta_{O_{1}}}\right)^T\nonumber \right]
\end{align}
Furthermore, dynamics of the human operator are calculated with a similar linear parametrization form as
\begin{equation}
	\label{eq:reqOperatorDynamics}
	\mathbf{Y}_h \boldsymbol{\theta}_h = \mathbf{M}_h \dot{\mathcal{V}}_{m \rm r} + \mathbf{D}_h \dot{\mathbf{x}}_h + \mathbf{K}_h {\mathbf{x}}_h.
\end{equation}

Then, control equations for the master manipulator can be defined as
\begin{equation}
\mathbf{J}_{m}^{-T}\boldsymbol{\tau}_{m} = \Phi_{m}^T\mathbf{Y}_m \boldsymbol{\theta}_m + \mathbf{Y}_h \boldsymbol{\theta}_h 
		   + \hat{\mathbf{f}}_h^* +  K_m\left({\mathcal{V}}_{mr}  - {\mathcal{V}}_{m}\right)
		   \label{eq:controlofmaster}
\end{equation}
where $ K_m \in \mathbb{R}^{2\times2} $ is a positive-definite gain matrix. The last term in \eqref{eq:controlofmaster} is a velocity feedback term used to ensure the control stability.

\subsection{Force Estimation}

The net reaction force from the master manipulator toward the operator can be estimated using the known dynamics of the master manipulator as a base. This method is similar to the inverse dynamics based estimation methods, described in \cite{Haddadin2017}. The main difference here is that the estimated actuator torque and applied torque are calculated based on the inverse dynamics, yielding that the external force can be estimated in addition to the mere collision detection, possible with the simpler method. The force estimate can be expressed as
\begin{align}
	\label{eq:fmEstimation}
	\mathbf{\hat{f}}_m = & \mathbf{J}_m^{-T} \left(
	\boldsymbol{\tau}_{m} - \boldsymbol{\tau}_{mm} \right)
\end{align}
where $ \boldsymbol{\tau}_{m} $ is the master robot control input defined in \eqref{eq:controlofmaster}, and $ \boldsymbol{\tau}_{mm} $ is the estimated master robot dynamics, defined as
\begin{align}
	\boldsymbol{\tau}_{mm} = &\left(\Theta_m^T \mathcal{M}_m^* \Theta_m\right)\hat{ \ddot{\mathbf{q}}}_m + \big(\Theta_m^T \mathcal{C}_m^* \Theta_m+\Theta_m^T \mathcal{M}_m^* \dot{\Theta}_m\big) \hat{ \dot{\mathbf{q}}}_m + \Theta_m^T \mathcal{G}_m^*
\end{align}
where $ \hat{ \ddot{\mathbf{q}}}_m $ and $  \hat{ \dot{\mathbf{q}}}_m $ are estimates of $ { \mathbf{\ddot{q}}}_m $ and $ { \mathbf{\dot{q}}}_m $, respectively, obtained by differentiation from the measured joint angles $ \mathbf{q}_m $. The filtered estimate of the master manipulator force vector is obtained using
\begin{equation}
\label{eq:filteredFm}
\dot{\tilde{\mathbf{f}}}_m + \mathbf{C}\tilde{\mathbf{f}}_m = \mathbf{C}\hat{\mathbf{f}}_m
\end{equation}
where $ \mathbf{C} \in \mathbb{R}^{2\times2}$ is a diagonal positive definite matrix.

\subsection{Computation Algorithms}

Differentiating \eqref{eq:VmrDef} and expressing $ \dot{\mathcal{V}}_{mr} $ as an affine function of $ \dot{\tilde{\mathbf{f}}}_m $ yields
\begin{equation}
\label{eq:dotVmrAffine}
\dot{\mathcal{V}}_{mr} = \mathbf{ A }_1(t) \dot{\tilde{\mathbf{f}}}_m + \mathbf{B}_1(t)
\end{equation}
where $ \mathbf{ A }_1(t) \in \mathbb{R}^{2\times2} $ is a known matrix and $ \mathbf{ B }_1(t) = \dot{\mathcal{V}}_{md} \in \mathbb{R}^2$ is a known vector (which will be given in Section~\ref{sec:teleoperation}), and $ \dot{\tilde{\mathbf{f}}}_m \in \mathbb{R}^2 $ is a vector to be specified later in this subsection.

Using \eqref{eq:qmr} and \eqref{eq:vmr}, it follows from \eqref{eq:linearparametrization}--\eqref{eq:fmEstimation}, that
\begin{equation}
	\label{eq:AffineFunction2}
	\mathbf{\hat{f}}_m =  \mathbf{J}_m^{-T}\left(\boldsymbol{\tau}_m - \boldsymbol{\tau}_{mm}\right) = \mathbf{ A }_2(t)\mathbf{ A } \dot{\tilde{\mathbf{f}}}_m + \mathbf{ B }_2(t)
\end{equation}
where $ \mathbf{ A }_2(t) \in \mathbb{R}^{2\times2} $ is a known matrix and $ \mathbf{ B }_2(t) \in \mathbb{R}^2$ is a known vector. Then, it follows from \eqref{eq:filteredFm}
\begin{equation}
	\dot{\tilde{\mathbf{f}}}_m = \left(\mathbf{C}\mathbf{ A }_2(t)\mathbf{ A }\right) \dot{\tilde{\mathbf{f}}}_m + \mathbf{ B }_3(t)
	\label{eq:forceEstimationDerivateBasis}
\end{equation}
where $ \mathbf{ B }_3(t) \in \mathbb{R}^2$ is a known vector. The existence of a computational algebraic loop can be clearly seen in \eqref{eq:forceEstimationDerivateBasis}.

To ensure numerical stability, we must have
\begin{equation}
	\substack{\mathlarger{\sigma}\\\scriptsize \rm max}	\left(\mathbf{C}\mathbf{ A }_2(t)\mathbf{ A }\right) < 1.
\end{equation}
This means both $ \mathbf{ C } $ and $ \mathbf{ A } $ must be restricted.

Finally, $ \dot{\tilde{\mathbf{f}}}_m $ can be computed from \eqref{eq:forceEstimationDerivateBasis} as
\begin{equation}
	\label{eq:dTildeFm}
	\dot{\tilde{\mathbf{f}}}_m = \left[\mathbf{ I }_{2\times2} - \left(\mathbf{C}\mathbf{ A }_2(t)\mathbf{ A }\right)\right]^{-1}\mathbf{ B }_3(t).
\end{equation}
Once $ \dot{\tilde{\mathbf{f}}}_m $ is obtained, $ {\tilde{\mathbf{f}}}_m $ in \eqref{eq:VmrDef} can be computed using integration with $ \dot{\tilde{\mathbf{f}}}_m(0) = 0 $.

\subsection{Stability}
Substituting \eqref{eq:controlofmaster} and \eqref{eq:LTImodel} into \eqref{eq:dynamicsofMaster} yields
\begin{align}
	\label{eq:StabilityBaseFunction}
	\Phi_m^T &\mathcal{M}_m^* \frac{d}{dt}\Big(\Phi_m\left(\mathcal{V}_m - \mathcal{V}_{mr}\right)\Big) + M_h\left(\dot{\mathcal{V}}_m -  \dot{\mathcal{V}}_{mr}\right) \nonumber \\
	& = \left(\Phi_m^T \mathcal{C}_m^* + K_m\right)\left(\mathcal{V}_{mr} - \mathcal{V}_m\right) + \Psi(t)\left(\hat{\mathbf{p}}-\mathbf{p}\right).
\end{align}
Then the non-negative function for the master manipulator is chosen as
\begin{align}
	\nu_m = \ &\frac{1}{2}\left(\mathcal{V}_{mr}- \mathcal{V}_m\right)^T\left(\Phi_m^T \mathcal{M}_m^* \Phi_m + M_h\right)\left(\mathcal{V}_{mr} - \mathcal{V}_m\right)
	+ \frac{1}{2}\sum_{i}\frac{p_{i} - \hat{p}_{i}}{\rho_{i}}.
	\label{eq:nonnegfunctionMaster}
\end{align}
The time-derivative of the non-negative function in \eqref{eq:nonnegfunctionMaster} is obtained using \eqref{eq:StabilityBaseFunction}, \eqref{eq:standardAdaptation} and the skew-symmetric properties of $ \mathcal{C}_m^* $ as
\begin{equation}
	\label{eq:dnonnegfunctionMaster}
	\dot{\nu}_m \le - \left(\mathcal{V}_{mr}- \mathcal{V}_m\right)^T K_m \left(\mathcal{V}_{mr}- \mathcal{V}_m\right).
\end{equation}

\begin{theorem}
	Analyzing the master manipulator \eqref{eq:dynamicsofMaster} with the human operator \eqref{eq:LTImodel} subject to control \eqref{eq:controlofmaster} with estimated~exogenous operator force using adaptation law \eqref{eq:standardAdaptation}, it yields
	\begin{equation}
	\label{eq:convergenceMaster}
	\xi_m \equiv \mathcal{V}_{md} - \mathcal{V}_{m} - \mathbf{A}\tilde{{\mathbf{f}}}_m \in L_2 \bigcap L_\infty.
	\end{equation}
\end{theorem}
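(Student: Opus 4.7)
The plan is to recognize that, by the definition \eqref{eq:VmrDef} of the required velocity, $\xi_m$ is precisely the velocity tracking error $\mathcal{V}_{mr}-\mathcal{V}_m$, so the claim reduces to showing $\mathcal{V}_{mr}-\mathcal{V}_m\in L_2\cap L_\infty$. The work of establishing the error dynamics \eqref{eq:StabilityBaseFunction}, the Lyapunov-like candidate $\nu_m$ in \eqref{eq:nonnegfunctionMaster}, and the key inequality \eqref{eq:dnonnegfunctionMaster} has already been done, so what remains is to extract the two function-space conclusions from $\nu_m$ and $\dot\nu_m$ in the standard adaptive-control manner.

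First I would confirm non-negativity of $\nu_m$: the quadratic piece is non-negative because $\Phi_m^T\mathcal{M}_m^*\Phi_m+M_h$ is symmetric positive definite (inertias of the rigid bodies plus the operator), and the parameter-error piece is a sum of squares $\tfrac{1}{2}\sum_i (p_i-\hat p_i)^2/\rho_i$ with $\rho_i>0$ (reading the expression in \eqref{eq:nonnegfunctionMaster} in its intended squared form, consistent with the use of the adaptation law \eqref{eq:standardAdaptation} and the skew-symmetry of $\mathcal{C}_m^*$ in deriving \eqref{eq:dnonnegfunctionMaster}). Combined with $\dot\nu_m\le 0$ from \eqref{eq:dnonnegfunctionMaster}, this gives the monotone bound
\begin{equation*}
0\le \nu_m(t)\le \nu_m(0)<\infty,\qquad \forall\, t\ge 0.
\end{equation*}

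Next, to obtain the $L_\infty$ conclusion, I would use $\nu_m(t)\le \nu_m(0)$ together with the quadratic lower bound
\begin{equation*}
\tfrac{1}{2}\lambda_{\min}\!\left(\Phi_m^T\mathcal{M}_m^*\Phi_m+M_h\right)\|\mathcal{V}_{mr}-\mathcal{V}_m\|^2\le \nu_m(t),
\end{equation*}
which yields $\|\xi_m\|\in L_\infty$. For the $L_2$ part, I would integrate \eqref{eq:dnonnegfunctionMaster} on $[0,t]$:
\begin{equation*}
\int_0^{t}\!\xi_m^T K_m\,\xi_m\,d\tau \le \nu_m(0)-\nu_m(t)\le \nu_m(0),
\end{equation*}
and, since $K_m$ is positive definite, $\lambda_{\min}(K_m)\int_0^t\|\xi_m\|^2\,d\tau\le \nu_m(0)$, so letting $t\to\infty$ gives $\xi_m\in L_2$.

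The routine but only mildly delicate point is verifying that the skew-symmetry of $\mathcal{C}_m^*$ together with \eqref{eq:StabilityBaseFunction} and the adaptation law \eqref{eq:standardAdaptation} really produces exactly the clean inequality \eqref{eq:dnonnegfunctionMaster}: differentiating $\nu_m$ yields a cross term $(\mathcal{V}_{mr}-\mathcal{V}_m)^T\Phi_m^T\mathcal{C}_m^*\Phi_m(\mathcal{V}_{mr}-\mathcal{V}_m)$ that must vanish by skew-symmetry, and the parameter-error term $\Psi(t)(\hat{\mathbf{p}}-\mathbf{p})$ in \eqref{eq:StabilityBaseFunction} must be canceled by $-\sum_i(p_i-\hat p_i)\dot{\hat p}_i/\rho_i$ using \eqref{eq:standardAdaptation}, with the projection switch $\kappa$ only making $\dot\nu_m$ more negative (never worse) on the clipped boundary. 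Once these cancellations are in place, everything else is standard; no Barbalat-type argument is needed because the two integrability claims come directly from $\nu_m(t)\le\nu_m(0)$ and the integral inequality, so the theorem follows.
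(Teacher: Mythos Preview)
Your proposal is correct and follows essentially the same route as the paper: the paper's proof simply says the result ``directly follows'' from \eqref{eq:nonnegfunctionMaster} and \eqref{eq:dnonnegfunctionMaster} together with the $L_2/L_\infty$ stability machinery of Appendix~\ref{App:Lebesgue_Stab} (Lemma~\ref{lem:stab}), which is exactly the argument you have spelled out in detail. Your observation that $\xi_m=\mathcal{V}_{mr}-\mathcal{V}_m$ via \eqref{eq:VmrDef}, your reading of the parameter-error term in \eqref{eq:nonnegfunctionMaster} as the intended sum of squares, and your handling of the projection switch $\kappa$ are all appropriate elaborations of steps the paper leaves implicit.
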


The proof directly follows \eqref{eq:nonnegfunctionMaster} and \eqref{eq:dnonnegfunctionMaster}. For the concept of $L_2$ \textit{and} $L_\infty$ \textit{stability} (having similarities to \textit{Lyapunov functions method}), see Appendix~\ref{App:Lebesgue_Stab}.

\section{Slave Manipulator}
\label{sec:Slave}

A commercial HIAB-031 hydraulic manipulator is chosen~to act as the slave manipulator of the teleoperation system. The manipulator is retrofitted with fast hydraulic servo valves,~pressure transducers to measure cylinder chamber pressures and high accuracy incremental encoders to measure joint angles. Although the manipulator is retrofitted, it does not have force/ torque sensor at the TCP. Consequently, a force-sensor-less control method with external force estimation is used~for the slave manipulator as was the case with the master manipulator.  

In the experiments, manipulation and force perception is considered within the same 2-DOF plane as with the master manipulator. The extension cylinder and rotation of the boom was mechanically locked. Fig.~\ref{fig:HiabSOG} a illustrates the slave manipulator and shows several important frames of the manipulator. Frame $ \{\mathbf{B_s}\} $ is fixed to the base of the slave manipulator, frame $ \{\mathbf{O}_2\} $ is attached to the last link of the slave manipulator and frame $ \{\mathbf{G}\} $ is attached to the tip of the slave manipulator and has the same orientation as frame $ \{\mathbf{O}_2\} $. Frame $ \{\mathbf{C}\} $ has the same origin as frame $ \{\mathbf{G}\} $, but is aligned with frame $ \{\mathbf{B}_s\} $.

As discussed in Section~\ref{sec:VDC_Concept}, \emph{VDC enables modularity in the control~design}. Consequently, the slave manipulator can be considered~as a subsystem (with its own local subsystems) of the overall~system. Stability-guaranteed constrained motion control of the manipulator is described in \cite{Koivumaki_TRO2015}. To incorporate the control system designed in \cite{Koivumaki_TRO2015}, control equations of the last object need to be adjusted while \emph{rest of the control system is kept identical to that of} \cite{Koivumaki_TRO2015}. Fig.~\ref{fig:HiabSOG} presents the decomposed structure of the slave manipulator, with the re-used control design circled by a dashed line.

%%%%%%%%%%%%%%%% begin figure %%%%%%%%%%%%%%%%%%%
\begin{figure}[t]
	\begin{center}
		\includegraphics[width=0.65\columnwidth]{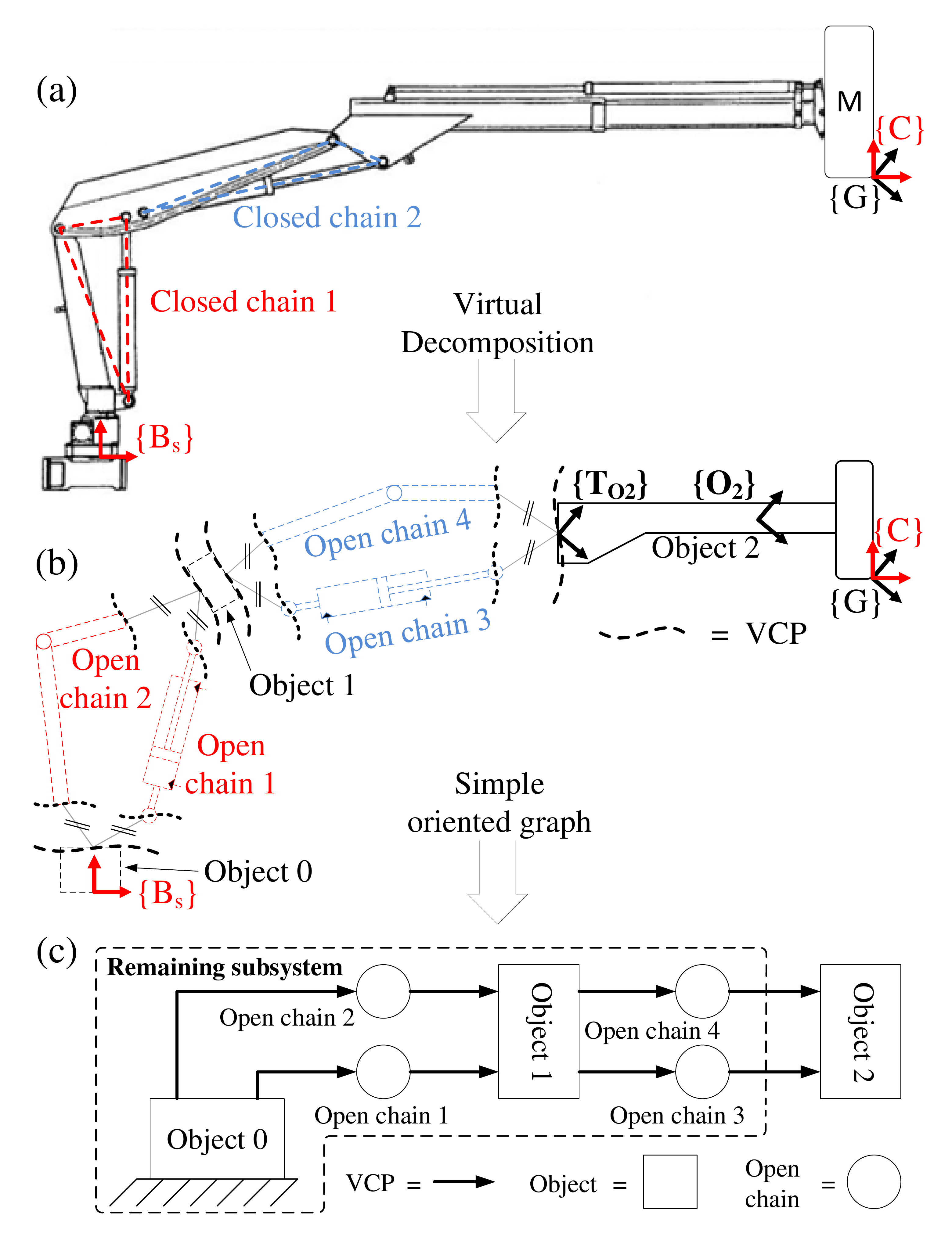}
	\end{center}
	\caption{(a) The slave manipulator, (b) Virtual decomposition of the slave manipulator (c) simple oriented graph (SOG) of the slave manipulator. The circled area in the SOG represents subsystem of the slave manipulator, for which the control has been designed in \cite{Koivumaki_TRO2015}.}
	\label{fig:HiabSOG} 
\end{figure}
%%%%%%%%%%%%%%%% end figure %%%%%%%%%%%%%%%%%%%%%

\subsection{Object 2 -- Kinematics and Dynamics}

Let the linear/angular velocity vector ${}^{\mathbf{T}_{\rm O2}}V \in \mathbb{R}^6$ at the driven VCP of Object 2 be known from the kinematic chain through the previous subsystems (see \cite{Koivumaki_TRO2015}). Then, kinematic transformations among the frames in Object 2 (see Fig. \ref{fig:object2}) can be written as 
\begin{align}
	{}^{\mathbf G}V_{} &= {}^{\mathbf{T}_{\rm O2}}{\mathbf{U}}^T_{\mathbf G}{}^{\mathbf{T}_{\rm O2}}V \nonumber \\
										 &= {}^{\mathbf{O}_2}{\mathbf{U}}^T_{\mathbf G}{}^{\mathbf{O}_2}V	\label{EQ_GV} \\
	{}^{\mathbf C}V_{} &={\rm diag}({}^{\mathbf G}{\mathbf R}_{\mathbf C},{}^{\mathbf G}{\mathbf R}_{\mathbf C})	{}^{\mathbf G}V_{} \label{eq:CV}
\end{align}

Next, dynamics of the environment are defined. In this work, we assume flexible environment with dynamics described by second-order linear time-invariant model \cite{Zhu2000} as
\begin{equation}
	\label{eq:fs}
	\mathbf{f}_{s} = 
	\mathbf{M}_e\ddot{\mathbf{x}}_{s} + \mathbf{D}_e \dot{\mathbf{x}}_{s} + \mathbf{K}_e \mathbf{x}_{s}
\end{equation}
where $\mathbf{M}_e \in \mathbb{R}^{2\times2}$, $\mathbf{D}_e \in \mathbb{R}^{2\times2}$ and $\mathbf{K}_e \in \mathbb{R}^{2\times2}$~are~symmetric positive-definite matrices approximating the inertia,~damping and stiffness of the environment, respectively, and $\mathbf{x}_s \in \mathbb{R}^2$ denotes the tip position of the slave manipulator, expressed in frame $ \{\mathbf{B_s}\} $, subject to $\dot{\mathbf{x}}_{s} = \mathcal{ V }_s$~with
\begin{equation} \label{EQ_Vs}
	\mathcal{V}_s = \left[\mathbf{I}_{2\times2}\ \ \boldsymbol{0}_{2\times4}\right]{}^{\mathbf { C }}{ V }.
\end{equation}
Then, dynamics of the environment can be included on the slave manipulator as
\begin{align}
	{}^{{\mathbf G}}F_{} &= {\rm diag}({}^{\mathbf G}{\mathbf R}_{\mathbf C},{}^{\mathbf G}{\mathbf R}_{\mathbf C}) 
	\left[\mathbf{I}_{2\times2}\ \ \boldsymbol{0}_{2\times4}\right]^T
	\sigma_f \mathbf{f}_{s} \label{EQ_GF}
\end{align}
where
\begin{equation} \label{EQ_switching}
	\sigma_{ f} = \left\{
	\begin{array}{l l}
	0 & \quad \text{approach motion}\\
	1 & \quad \text{constrained motion.}
	\end{array} \right.
\end{equation}

The net force/moment vector (rigid body dynamics) ${}^{{\mathbf O}_{2}}F^{{\mathbf *}}$ of Object 2 can be written in view of \eqref{eq:rigidbodydynamics} as
\begin{equation} \label{EQ_O2F*}
{{\mathbf M}}_{{\mathbf O}_{2}}\frac{d}{dt}({}^{{\mathbf O}_{2}}V)+{{\mathbf C}}_{{\mathbf O}_{2}}({}^{{\mathbf O}_{2}}{\omega }){}^{{\mathbf O}_{2}}V+{{\mathbf G}}_{{\mathbf O}_{2}}={}^{{\mathbf O}_{2}}F^{*}.
\end{equation}
and, eventually, the force balance (i.e. force resultant) equation of Object 2 can be written as
\begin{equation} \label{EQ_O2F*2}
{}^{{\mathbf O}_{2}}F^{{\mathbf *}} = {{}^{{\mathbf O}_{2}}{{\mathbf U}}}_{{{\mathbf T}}_{{\rm o}2}}{}^{{{\mathbf T}}_{{\rm o}2}}F - \sigma_{f}{{}^{{\mathbf O}_{2}}{{\mathbf U}}}_{\mathbf G}{}^{\mathbf G}F
\end{equation}

\subsection{Object 2 -- Control}

Let the required velocity of the slave manipulator be designed as 
\begin{equation}
	\label{eq:VsrDef}
	\mathcal{V}_{sr} = \mathcal{V}_{sd} - \mathbf{A}\tilde{\mathbf{f}}_s
\end{equation}
where $ \mathcal{V}_{sd} \in \mathbb{R}^2 $ is to be defined in Section~\ref{sec:teleoperation}, and $ \tilde{\mathbf{f}}_s \in \mathbb{R}^2 $ is obtained from $ \hat{\mathbf{f}}_s $ using a first order filter as
\begin{equation}
	\dot{\tilde{\mathbf{f}}}_s + \mathbf{C}\tilde{\mathbf{f}}_s = \mathbf{C}\hat{\mathbf{f}}_s
\end{equation}
and $ \hat{\mathbf{f}}_s $ is obtained using (15) in \cite{Koivumaki_TRO2015}.

Required piston velocities of the slave manipulator are then redesigned from (87) in \cite{Koivumaki_TRO2015} into
\begin{equation} \label{EQ_xr_dots}
{\begin{bmatrix}
	\dot{x}_{\rm 1r} \\
	\dot{x}_{\rm 3r}
	\end{bmatrix}} = \mathbf{J}^{-1}_{x}\mathcal{ V }_{sr}
\end{equation}
where $ \mathbf{J}^{-1}_{x} \in \mathbb{R}^{2\times2} $ is the invertible Jacobian matrix of the slave manipulator, defined in \cite{Koivumaki_TRO2015}.

Then, in view of \eqref{EQ_GV}, \eqref{eq:CV} and \eqref{EQ_Vs}, the required linear/angular velocity vectors in Object 2 can be written as 
\begin{align}
	{}^{\mathbf G}V_{r} &= {}^{\mathbf{T}_{\rm O2}}{\mathbf{U}}^T_{\mathbf G}{}^{\mathbf{T}_{\rm O2}}V_r \nonumber \\
		&= {}^{\mathbf{O}_2}{\mathbf{U}}^T_{\mathbf G}{}^{\mathbf{O}_2}V_r \label{EQ_GVr} \\
	{}^{\mathbf C}V_{r} &={\rm diag}({}^{\mathbf G}{\mathbf R}_{\mathbf C},{}^{\mathbf G}{\mathbf R}_{\mathbf C})	{}^{\mathbf G}V_{r} \label{eq:CVr}\\
	\mathcal{V}_{sr} &= \left[\mathbf{I}_{2\times2}\ \ \boldsymbol{0}_{2\times4}\right]{}^{\mathbf {C}}{V}_r. \label{EQ_Vsr}
\end{align}
	
The required contact force of the slave manipulator is designed as
\begin{equation}
	\label{eq:fsr}
	\mathbf{f}_{sr} = \mathbf{M}_e\dot{\mathcal{V}}_{sr} + \mathbf{D}_e \dot{\mathbf{x}}_{s} + \mathbf{K}_e \mathbf{x}_{s}.
\end{equation}
Finally, using \eqref{eq:linearparametrization} and \eqref{EQ_GF}--\eqref{EQ_O2F*2} the required control laws for Object 2 dynamics can be written as 
\begin{align}	
	{}^{{\mathbf G}}F_{r} &= {\rm diag}({}^{\mathbf G}{\mathbf R}_{\mathbf C},{}^{\mathbf G}{\mathbf R}_{\mathbf C})\left[\mathbf{I}_{2\times2}\ \ \boldsymbol{0}_{2\times4}\right]^T	\sigma_f \mathbf{f}_{sr} \label{EQ_GFr}\\
	{}^{{\mathbf O}_{2}}F^{*}_r &= {\mathbf Y}_{{\mathbf O}_2}{\widehat{\pmb\theta}}_{{\mathbf O}_2}+{{\mathbf K}}_{{{\mathbf O}_2}}({}^{{{\mathbf O}_2}}V_{\rm r} - {}^{{{\mathbf O}_2}}V) \label{EQ_O2F*r} \\
	{}^{{\mathbf O}_{2}}F^{*}_r &= {{}^{{\mathbf O}_{2}}{{\mathbf U}}}_{{{\mathbf T}}_{{\rm o}2}}{}^{{{\mathbf T}}_{{\rm o}2}}F_r - \sigma_{f}{{}^{{\mathbf O}_{2}}{{\mathbf U}}}_{\mathbf G}{}^{\mathbf G}F_r \label{EQ_O2F*r2}
\end{align}
In line with \eqref{eq:linearparametrization}, ${\mathbf Y}_{{\mathbf O}_2}{\widehat{\pmb\theta}} \in \mathbb{R}^{6}$ in \eqref{EQ_O2F*r} is the model-based feedforward compensation term for the rigid body dynamics and ${\mathbf K}_{{\mathbf O}_2} \in \mathbb{R}^{6\times6}$ is a positive-definite velocity feedback matrix to ensure the control stability. By defining
\begin{align}
	{\mathbf{s}}_{{\mathbf O}_{2}} &= {\mathbf Y}^T_{{\mathbf O}_{2}}({}^{{\mathbf O}_{2}}{V_{\rm r}}-{}^{{\mathbf O}_{2}}V) \label{PA_sO2}
\end{align}
the estimated parameter vector ${\widehat{\pmb\theta}}_{{\mathbf O}_2} \in \mathbb{R}^{13}$ in \eqref{EQ_O2F*r} is updated~as
\begin{align}
	\dot{\widehat{\theta}}_{{\mathbf O}_2i} &= {\rho}_{i} {{s}}_{{\mathbf O}_{2}i} \kappa_i,\ \forall i\in\{1,2,...,13\}  \label{PA_thetaO2} \\
	\kappa_i &= \left\{\begin{matrix} 0, \quad {\widehat{\theta}}_{{\mathbf O}_2i} \le {{\theta}}_{{\mathbf O}_2i}^- \ {\rm and} \ {s}_{{\mathbf O}_{2}i} \le 0 \\
									0, \quad {\widehat{\theta}}_{{\mathbf O}_2i} \ge {{\theta}}_{{\mathbf O}_2i}^+ \ {\rm and} \ {s}_{{\mathbf O}_{2}i} \ge 0 \\
									1, \quad \rm{otherwise} \phantom{asdfdfgfgfgasd}	\end{matrix}	\right.\label{PA_kappaO2}
\end{align}
where $\widehat{\theta}_{{\mathbf O}_2i}$ is the \textit{i}th element of ${\widehat{\pmb\theta}}_{{\mathbf O}_2}$; ${s}_{{\mathbf O}_{2}i}$ is the \textit{i}th element of ${\mathbf{s}}_{{\mathbf O}_{2}}$; ${\rho}_{i} > 0$ is the update gain; ${\theta}_{{\mathbf O}_2i}^-$ is the lower bound of $\widehat{\theta}_{{\mathbf O}_2i}$; and ${{\theta}}_{{\mathbf O}_2i}^+$ is the upper bound of $\widehat{\theta}_{{\mathbf O}_2i}$.

\subsection{Stability}

The remaining system, for which the control was designed first in \cite{Koivumaki_TRO2015}, qualifies as virtually stable according to Theorem~\ref{thm:RemainingSystem}

\begin{theorem}
	\label{thm:RemainingSystem}
	Consider the system encircled by a dashed line in Fig.~\ref{fig:HiabSOG}. The subsystem qualifies virtually stable with its affiliated vector $ \left(^{\mathbf{A}}V_r - ^{\mathbf{A}}V\right), \forall \mathbf{A} \in \Psi_r $ and its affiliated scalar variables $ \left(f_{{\rm p}i{\rm r}} -f_{{\rm p}i}\right) $ for the hydraulic cylinder i, $ \forall i \in {1,2} $, where $ \Psi_r $ contains rigid body frames of each rigid link and object of the remaining subsystem. A non-negative accompanying function for this system can be found as	
	\begin{align}
	\nu_{\mathbf{R}} \ge \ \ & \frac{1}{2} \sum_{\boldsymbol{A} \in \Psi_{\mathrm{r}}}\left(^{\mathbf{A}} V_{\mathrm{r}}-{}^{\mathbf{A}} V\right)^{T} \mathbf{M}_{\mathbf{A}}\left(^{\mathbf{A}} V_{\mathrm{r}}-^{\mathbf{A}} V\right) \nonumber \\ 
	+ &\frac{1}{2} \sum_{i=1}^{2}\left[\frac{1}{\beta k_{\mathrm{xi}}}\left(f_{{\rm p}i{\rm r}} -f_{{\rm p}i}\right)^{2}\right]
	\label{eq:SlaveNonNeg1}
	\end{align}
	such that
	\begin{align}
	\dot{\nu}_{\mathbf{R}} \leqslant &-\sum_{\boldsymbol{A} \in \Psi_{\mathrm{r}}}\left(^{\mathbf{A}} V_{\mathrm{r}}-{}^{\mathbf{A}} V\right)^{T} \mathbf{K}_{\mathbf{A}}\left(^{\mathbf{A}} V_{\mathrm{r}}-{}^{\mathbf{A}} V\right)-p_{\mathbf{T}_{\mathrm{O} 2}} \nonumber \\ &-\frac{k_f}{k_x}\sum_{i=1}^{2}\left(f_{{\rm p}i{\rm r}} -f_{{\rm p}i}\right)
	\label{eq:SlaveDNonNeg1}
	\end{align}
\end{theorem}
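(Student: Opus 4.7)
The plan is to exploit the fact that the subsystem circled by the dashed line in Fig.~\ref{fig:HiabSOG} is, by construction, identical in dynamics and control law to the subsystem analyzed in \cite{Koivumaki_TRO2015}; the only structural change made in the present paper lives at Object~2, which is outside the circled region. Consequently, the virtual stability claim reduces to re-running the per-subsystem VDC accompanying-function calculation of that reference and checking that the single external VPF, at the cut plane $\mathbf{T}_{{\rm O}2}$, appears with the correct sign in \eqref{eq:SlaveDNonNeg1}.

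First, I would build $\nu_{\mathbf R}$ subsystem by subsystem along the simple oriented graph of Fig.~\ref{fig:HiabSOG}(c). For each rigid-body frame $\mathbf{A}\in\Psi_r$ I would take the quadratic form $\tfrac{1}{2}\bigl({}^{\mathbf{A}}V_r-{}^{\mathbf{A}}V\bigr)^T\mathbf{M_A}\bigl({}^{\mathbf{A}}V_r-{}^{\mathbf{A}}V\bigr)$, and for each hydraulic cylinder $i$ the scalar $\tfrac{1}{2\beta k_{xi}}\bigl(f_{{\rm p}i{\rm r}}-f_{{\rm p}i}\bigr)^2$; the $\ge$ in \eqref{eq:SlaveNonNeg1} then simply absorbs the additional non-negative parameter-error quadratics generated by the projection-type adaptation \eqref{PA_thetaO2}--\eqref{PA_kappaO2}. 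Differentiating each rigid-body piece along \eqref{eq:rigidbodydynamics}, substituting the local VDC control law $^{\mathbf{A}}F^{*}_r=\mathbf{Y_A}\widehat{\boldsymbol\theta}_\mathbf{A}+\mathbf{K_A}({}^{\mathbf{A}}V_r-{}^{\mathbf{A}}V)$, invoking the skew-symmetry of $\mathbf{C_A}$, and using the sign-preserving property of the projection yields the canonical subsystem bound $-({}^{\mathbf{A}}V_r-{}^{\mathbf{A}}V)^T\mathbf{K_A}({}^{\mathbf{A}}V_r-{}^{\mathbf{A}}V)+p_{\mathbf{A},{\rm in}}-p_{\mathbf{A},{\rm out}}$, where $p_{\mathbf{A},{\rm in}}$ and $p_{\mathbf{A},{\rm out}}$ are the VPFs on the two ports of frame $\mathbf{A}$. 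The hydraulic-cylinder pieces, differentiated along the chamber-pressure dynamics and combined with the servo-valve command of \cite{Koivumaki_TRO2015}, produce the residual $-\tfrac{k_f}{k_x}(f_{{\rm p}i{\rm r}}-f_{{\rm p}i})$ together with exactly the joint-level VPF needed to pair with the corresponding link.

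Summing all subsystem bounds across the oriented graph, every internal VPF appears once with a positive sign and once with a negative sign and therefore cancels; only the VPF on the single external port $\mathbf{T}_{{\rm O}2}$, which connects the remaining subsystem to Object~2, survives, contributing the $-p_{\mathbf{T}_{{\rm O}2}}$ term of \eqref{eq:SlaveDNonNeg1}. The main obstacle is the bookkeeping of signs at the cuts: one must verify that every internal VPF written for one subsystem is the negative of the VPF written for its neighbor (this follows from \eqref{eq:velocitytransformation}--\eqref{eq:forcetransformation} applied at each cut plane), and that the cylinder-level algebra collapses cleanly to the stated cross term after the servo-valve feedback is substituted. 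Both verifications are identical line-for-line to the derivation in \cite{Koivumaki_TRO2015}, so I would state the per-subsystem bounds explicitly, justify the cancellation of internal VPFs from the graph structure, and refer to that reference for the unchanged low-level algebra rather than reproduce it here.
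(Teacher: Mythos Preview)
Your proposal is correct and mirrors the paper's own proof, which consists of a single sentence deferring entirely to the per-subsystem VDC analysis in \cite{Koivumaki_TRO2015}; you have simply spelled out the bookkeeping that the paper leaves implicit. One small slip: the parameter-error quadratics absorbed by the $\ge$ in \eqref{eq:SlaveNonNeg1} come from the adaptation laws of the rigid bodies \emph{inside} the circled region (defined in \cite{Koivumaki_TRO2015}), not from \eqref{PA_thetaO2}--\eqref{PA_kappaO2}, which belong to Object~2 and lie outside it.
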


\begin{proof}
	The proof for Theorem~\ref{thm:RemainingSystem} can be obtained from the results of \cite{Koivumaki_TRO2015}.
\end{proof}

\begin{theorem} \label{thm:O2}
Consider Object 2 described by \eqref{EQ_GV}--\eqref{EQ_O2F*2}, combined with the control equations \eqref{eq:VsrDef}--\eqref{EQ_O2F*r2} and with the parameter adaptation \eqref{PA_sO2}--\eqref{PA_kappaO2}. This subsystem is virtually stable with its affiliated vector ${}^{{\mathbf O}_{2}}V_{r} - {}^{{\mathbf O}_{2}}V$ being a virtual function in both $L_2$ and $L_\infty$ in the sense of Definition~\ref{def:Vstab}. This is because a non-negative accompanying function
\begin{align}
		\nu_{{{\mathbf O}_2}} &= \frac{1}{2}({}^{{\mathbf O}_{2}}V_{\rm r} - {}^{{\mathbf O}_{2}}V)^T{{\mathbf M}_{{\mathbf O}_{2}}}({}^{{\mathbf O}_{2}}V_{\rm r} - {}^{{\mathbf O}_{2}}V) +\frac{1}{2}\sum_{i=1}^{13}\frac{(\theta_{{{\mathbf O}_2}i} - \widehat{\theta}_{{{\mathbf O}_2}i})^2}{\rho_{{{\mathbf O}_2}i}} \label{STAB_O2_1}
\end{align}
can be found such that
\begin{equation}
		\dot{\nu}_{{{\mathbf O}_2}} \leqslant - ({}^{{\mathbf O}_{2}}V_{\rm r} - {}^{{\mathbf O}_{2}}V)^T{{\mathbf K}}_{{\mathbf O}_{2}}({}^{{\mathbf O}_{2}}V_{\rm r} - {}^{{\mathbf O}_{2}}V) + p_{{\mathbf T}_{{\rm O}2}} - p_{\mathbf G} \label{STAB_O2_2} 
\end{equation}
holds, where
\begin{equation}
		 \int_0^\infty p_{{\mathbf G}}(t)dt \geqslant - \gamma_s \label{STAB_O2_3}
\end{equation}
holds with $0 \leqslant \gamma_s < \infty$. Note that $p_{{\mathbf T}_{{\rm O}2}}$ is the virtual power flow by Definition~\ref{def:VPF} in the driven VCP of Object 2, and $p_{\mathbf G}$ characterizes the virtual power flow between the end-effector and the environment while in constrained motion (i.e., $\sigma_{f}$~=~1).
\end{theorem}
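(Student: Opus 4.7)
The plan is to follow the standard VDC accompanying-function construction for Object 2: assemble the velocity-tracking error equation, form the non-negative function \eqref{STAB_O2_1}, differentiate it along trajectories, and identify the residual force-error term as a difference of virtual power flows at the two cut points.

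First, applying the linear parametrization \eqref{eq:linearparametrization} along the required trajectory gives $\mathbf{M}_{\mathbf{O}_2}{}^{\mathbf{O}_2}\dot{V}_r + \mathbf{C}_{\mathbf{O}_2}({}^{\mathbf{O}_2}\omega){}^{\mathbf{O}_2}V_r + \mathbf{G}_{\mathbf{O}_2} = \mathbf{Y}_{\mathbf{O}_2}\boldsymbol{\theta}_{\mathbf{O}_2}$. Subtracting the actual rigid-body equation \eqref{EQ_O2F*}, substituting the control law \eqref{EQ_O2F*r}, and left-multiplying by $({}^{\mathbf{O}_2}V_r - {}^{\mathbf{O}_2}V)^T$ produces an identity of the form
\[
  \tfrac{d}{dt}\bigl[\tfrac{1}{2}({}^{\mathbf{O}_2}V_r - {}^{\mathbf{O}_2}V)^T\mathbf{M}_{\mathbf{O}_2}({}^{\mathbf{O}_2}V_r - {}^{\mathbf{O}_2}V)\bigr] = ({}^{\mathbf{O}_2}V_r - {}^{\mathbf{O}_2}V)^T\bigl[\mathbf{Y}_{\mathbf{O}_2}(\boldsymbol{\theta}_{\mathbf{O}_2} - \widehat{\boldsymbol{\theta}}_{\mathbf{O}_2}) - \mathbf{K}_{\mathbf{O}_2}({}^{\mathbf{O}_2}V_r - {}^{\mathbf{O}_2}V) + ({}^{\mathbf{O}_2}F^*_r - {}^{\mathbf{O}_2}F^*)\bigr],
\]
where constancy of the body-frame inertia $\mathbf{M}_{\mathbf{O}_2}$ and the rigid-body skew-symmetry of $\mathbf{C}_{\mathbf{O}_2}$ are used to discard the Coriolis quadratic form.

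The parameter-error summand of \eqref{STAB_O2_1} differentiates through \eqref{PA_thetaO2} to $-\sum_i(\theta_{\mathbf{O}_2 i} - \widehat{\theta}_{\mathbf{O}_2 i})\,s_{\mathbf{O}_2 i}\,\kappa_i$. The projection rule \eqref{PA_kappaO2} keeps every saturated contribution non-positive, because whenever $\kappa_i = 0$ the bound ordering forces the sign of $(\theta_{\mathbf{O}_2 i} - \widehat{\theta}_{\mathbf{O}_2 i})\,s_{\mathbf{O}_2 i}$ to be non-positive. Adding this to the previous display cancels the $\mathbf{Y}_{\mathbf{O}_2}$ regressor cross term, leaving the $-\mathbf{K}_{\mathbf{O}_2}$ feedback quadratic plus the force-balance residual. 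Expanding that residual through \eqref{EQ_O2F*2} and \eqref{EQ_O2F*r2} and absorbing the transformation matrices with the velocity-transformation identity \eqref{eq:velocitytransformation} yields the inner products $({}^{\mathbf{T}_{\rm O2}}V_r - {}^{\mathbf{T}_{\rm O2}}V)^T({}^{\mathbf{T}_{\rm O2}}F_r - {}^{\mathbf{T}_{\rm O2}}F)$ and $\sigma_f\,({}^{\mathbf{G}}V_r - {}^{\mathbf{G}}V)^T({}^{\mathbf{G}}F_r - {}^{\mathbf{G}}F)$, which are exactly $p_{\mathbf{T}_{\rm O2}}$ and $p_{\mathbf{G}}$ in the sense of Definition~\ref{def:VPF}. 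This establishes \eqref{STAB_O2_2}.

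The main obstacle is the integral lower bound \eqref{STAB_O2_3}. During constrained motion ($\sigma_f = 1$), subtracting \eqref{eq:fs} from \eqref{eq:fsr} and invoking $\dot{\mathbf{x}}_s = \mathcal{V}_s$ makes the viscoelastic terms cancel, leaving $\mathbf{f}_{sr} - \mathbf{f}_s = \mathbf{M}_e\,\tfrac{d}{dt}(\mathcal{V}_{sr} - \mathcal{V}_s)$. Consequently $p_{\mathbf{G}}$ becomes the exact time derivative of the positive quadratic form $\tfrac{1}{2}(\mathcal{V}_{sr} - \mathcal{V}_s)^T\mathbf{M}_e(\mathcal{V}_{sr} - \mathcal{V}_s)$, so its integral telescopes over each constrained interval and, by positive-definiteness of $\mathbf{M}_e$, is bounded below by the negative of this quadratic form evaluated at the instant of the free-to-constrained transition. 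A finite $\gamma_s$ then follows once the velocity-tracking error at each impact is shown to be bounded---which is inherited from the accompanying-function decay over the preceding free-motion phase. This closes the virtual-stability proof in the sense of Definition~\ref{def:Vstab}.
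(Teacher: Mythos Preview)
Your proposal is correct and follows the same route as the paper. The paper's own proof in Appendix~\ref{proof:thm_O2} actually treats the derivation of \eqref{STAB_O2_2} as standard VDC machinery and omits it entirely, proving only the integral bound \eqref{STAB_O2_3}; your write-up is therefore more complete, but the substance---subtracting \eqref{eq:fs} from \eqref{eq:fsr} so that only the $\mathbf{M}_e$ term survives, recognizing $p_{\mathbf G}$ as the exact time derivative of $\tfrac{1}{2}(\mathcal{V}_{sr}-\mathcal{V}_s)^T\mathbf{M}_e(\mathcal{V}_{sr}-\mathcal{V}_s)$, and integrating---is identical.

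One minor remark: the paper sidesteps the switching issue by writing ``for constant value of $\sigma_f$'' and bounding by the initial quadratic form, whereas you handle it interval-by-interval and appeal to boundedness of the velocity error at each free-to-constrained transition. Your treatment is more honest about what is actually happening, but be aware that the last sentence (``inherited from the accompanying-function decay over the preceding free-motion phase'') flirts with circularity, since the $L_\infty$ bound you invoke is part of the conclusion of the overall virtual-stability argument. In practice this is resolved because during free motion $\sigma_f=0$ gives $p_{\mathbf G}=0$, so virtual stability on that interval is immediate and delivers the needed bound at the switching instant; you may want to state that explicitly rather than leave it implicit.
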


\begin{proof}
See Appendix~\ref{proof:thm_O2}.
\end{proof}

\begin{theorem}
	Considering \eqref{eq:slaveVirtualStbaility} and Definition \ref{def:Vstab}, the contact with the environment qualifies virtually stable. The non-negative accompanying function for the entire slave manipulator can be written by summing the individual functions from \eqref{eq:SlaveNonNeg1} and \eqref{STAB_O2_1} as
	\begin{align}
		\nu_{\rm tot} = &\nu_{\mathbf{R}} + \nu_{{{\mathbf O}_2}} \nonumber \\
		= 
		& \frac{1}{2} \sum_{\boldsymbol{A} \in \Psi_{\mathrm{r}}}\left(^{\mathbf{A}} V_{\mathrm{r}}-{}^{\mathbf{A}} V\right)^{T} \mathbf{M}_{\mathbf{A}}\left(^{\mathbf{A}} V_{\mathrm{r}}-^{\mathbf{A}} V\right) \nonumber \\ 
		+ 
		&\frac{1}{2} \sum_{i=1}^{2}\left[\frac{1}{\beta k_{\mathrm{xi}}}\left(f_{{\rm p}i{\rm r}} -f_{{\rm p}i}\right)^{2}\right] 
		+
		\frac{1}{2}\sum_{i=1}^{13}\frac{(\theta_{{{\mathbf O}_2}i} - \widehat{\theta}_{{{\mathbf O}_2}i})^2}{\rho_{{{\mathbf O}_2}i}} \nonumber \\ 
		+
		&\frac{1}{2}({}^{{\mathbf O}_{2}}V_{\rm r} - {}^{{\mathbf O}_{2}}V)^T{{\mathbf M}_{{\mathbf O}_{2}}}({}^{{\mathbf O}_{2}}V_{\rm r} - {}^{{\mathbf O}_{2}}V)
	\end{align}
	such that
	\begin{align}
		\dot{\nu}_{\rm tot} = 
		&\dot{\nu}_{\mathbf{R}} + \dot{\nu}_{{{\mathbf O}_2}} \\
		\leqslant 
		&-\sum_{\boldsymbol{A} \in \Psi_{\mathrm{r}}}\left(^{\mathbf{A}} V_{\mathrm{r}}-{}^{\mathbf{A}} V\right)^{T} \mathbf{K}_{\mathbf{A}}\left(^{\mathbf{A}} V_{\mathrm{r}}-{}^{\mathbf{A}} V\right)-p_{\mathbf{T}_{\mathrm{O} 2}} \nonumber \\ &-\frac{k_f}{k_x}\sum_{i=1}^{2}\left(f_{{\rm p}i{\rm r}} -f_{{\rm p}i}\right) \nonumber \\
		&- ({}^{{\mathbf O}_{2}}V_{\rm r} - {}^{{\mathbf O}_{2}}V)^T{{\mathbf K}}_{{\mathbf O}_{2}}({}^{{\mathbf O}_{2}}V_{\rm r} - {}^{{\mathbf O}_{2}}V) \nonumber\\
		&+p_{{\mathbf T}_{{\rm O}2}} - p_{\mathbf G}
	\end{align}
	
	Then stability analysis for the remaining subsystems follows exactly as shown in \cite{Koivumaki_TRO2015}, ultimately yielding stability of the entire slave robot. Then it follows that
	\begin{equation}
		\label{eq:convergenceSlave}
		\xi_s \equiv \mathcal{V}_{sd} - \mathcal{V}_{s} - \mathbf{A}\tilde{{\mathbf{f}}}_s \in L_2 \bigcap L_\infty.
	\end{equation}
\end{theorem}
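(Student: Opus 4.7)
The plan is to stitch together the two virtual-stability results already established (Theorem~\ref{thm:RemainingSystem} for the remaining hydraulic/link subsystem and Theorem~\ref{thm:O2} for Object~2) by summing their accompanying functions and exploiting the cancellation of virtual power flows across the interface at the driven VCP of Object~2. First, I would simply add \eqref{eq:SlaveNonNeg1} and \eqref{STAB_O2_1} to define $\nu_{\rm tot}=\nu_{\mathbf{R}}+\nu_{{{\mathbf O}_2}}$, which is non-negative by construction, and differentiate term-by-term using \eqref{eq:SlaveDNonNeg1} and \eqref{STAB_O2_2}. The crucial observation is that $p_{\mathbf{T}_{\mathrm{O}2}}$ appears with opposite signs in the two derivative bounds (once as an outgoing VPF for the remaining subsystem, once as an incoming VPF for Object~2), so these two terms cancel identically, leaving only the quadratic velocity-error penalties, the hydraulic cylinder force error penalty, and the single environment VPF term $-p_{\mathbf{G}}$.

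Next, I would invoke \eqref{STAB_O2_3} to argue that $-\int_0^t p_{\mathbf{G}}(\tau)\,d\tau$ is bounded above by $\gamma_s<\infty$, so integrating the derivative inequality from $0$ to $t$ yields
\begin{equation*}
\nu_{\rm tot}(t) + \int_0^t \Xi(\tau)\,d\tau \;\leq\; \nu_{\rm tot}(0) + \gamma_s,
\end{equation*}
where $\Xi(\tau)$ collects the positive-semidefinite quadratic error terms. This gives boundedness of $\nu_{\rm tot}$ (hence $L_\infty$ bounds on all velocity errors $({}^{\mathbf{A}}V_r-{}^{\mathbf{A}}V)$, on $({}^{\mathbf{O}_2}V_r-{}^{\mathbf{O}_2}V)$, on the cylinder force errors, and on the parameter errors) together with $L_2$ integrability of those same error signals. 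By Definition~\ref{def:Vstab} this is precisely virtual stability of the composite slave manipulator; it then follows from the VDC composition principle that the entire slave system, including its contact with the environment, is stable in the Lebesgue sense.

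Finally, to land on \eqref{eq:convergenceSlave}, I would trace the velocity error back to the Cartesian tip coordinates. Because $\mathcal{V}_{sr}$ is constructed from ${}^{\mathbf{O}_2}V_r$ through the known kinematic transformations \eqref{EQ_GVr}--\eqref{EQ_Vsr} and $\mathcal{V}_s$ analogously from ${}^{\mathbf{O}_2}V$ via \eqref{EQ_GV}--\eqref{EQ_Vs}, and because these transformations are bounded, the $L_2\cap L_\infty$ property inherited from $({}^{\mathbf{O}_2}V_r-{}^{\mathbf{O}_2}V)$ transfers to $(\mathcal{V}_{sr}-\mathcal{V}_s)$. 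Substituting the design equation \eqref{eq:VsrDef}, namely $\mathcal{V}_{sr}=\mathcal{V}_{sd}-\mathbf{A}\tilde{\mathbf{f}}_s$, immediately identifies $\xi_s=\mathcal{V}_{sd}-\mathcal{V}_s-\mathbf{A}\tilde{\mathbf{f}}_s=\mathcal{V}_{sr}-\mathcal{V}_s\in L_2\cap L_\infty$.

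The main obstacle I anticipate is the bookkeeping of the VPF terms across the subsystem interface: one must verify that the sign conventions for $p_{\mathbf{T}_{\mathrm{O}2}}$ used in \cite{Koivumaki_TRO2015} (carried into \eqref{eq:SlaveDNonNeg1}) are consistent with the convention inherited by \eqref{STAB_O2_2} via \eqref{EQ_O2F*2}, so that the cancellation is exact rather than up to a sign. A secondary subtlety is that the hydraulic cylinder force-error term $-(k_f/k_x)\sum_i(f_{\mathrm{p}ir}-f_{\mathrm{p}i})$ is linear rather than quadratic in the error; the argument from \cite{Koivumaki_TRO2015} that absorbs this into a globally integrable bound must be cited carefully, since otherwise one cannot immediately conclude $L_2$ integrability of the cylinder force errors without additional work.
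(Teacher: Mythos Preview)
Your proposal is correct and follows essentially the same route as the paper: sum the two accompanying functions \eqref{eq:SlaveNonNeg1} and \eqref{STAB_O2_1}, add the derivative bounds \eqref{eq:SlaveDNonNeg1} and \eqref{STAB_O2_2} so that the interface VPF $p_{\mathbf{T}_{\mathrm{O}2}}$ cancels, absorb the remaining environment term $-p_{\mathbf{G}}$ via the integral bound \eqref{STAB_O2_3}, and then appeal to Lemma~\ref{lem:stab}/Definition~\ref{def:Vstab} together with the stability analysis of \cite{Koivumaki_TRO2015} to conclude $L_2\cap L_\infty$ for the velocity errors, from which \eqref{eq:convergenceSlave} follows by the kinematic reduction and the substitution $\mathcal{V}_{sr}-\mathcal{V}_s=\xi_s$ via \eqref{eq:VsrDef}. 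Your two flagged caveats (the VPF sign convention and the linear-looking cylinder force term) are bookkeeping issues inherited from \cite{Koivumaki_TRO2015} rather than gaps in your argument.
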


%%%%%%%%%%%%%%%% begin figure %%%%%%%%%%%%%%%%%%%
\begin{figure}[b]
	\begin{center}
		\includegraphics[width=0.6\columnwidth]{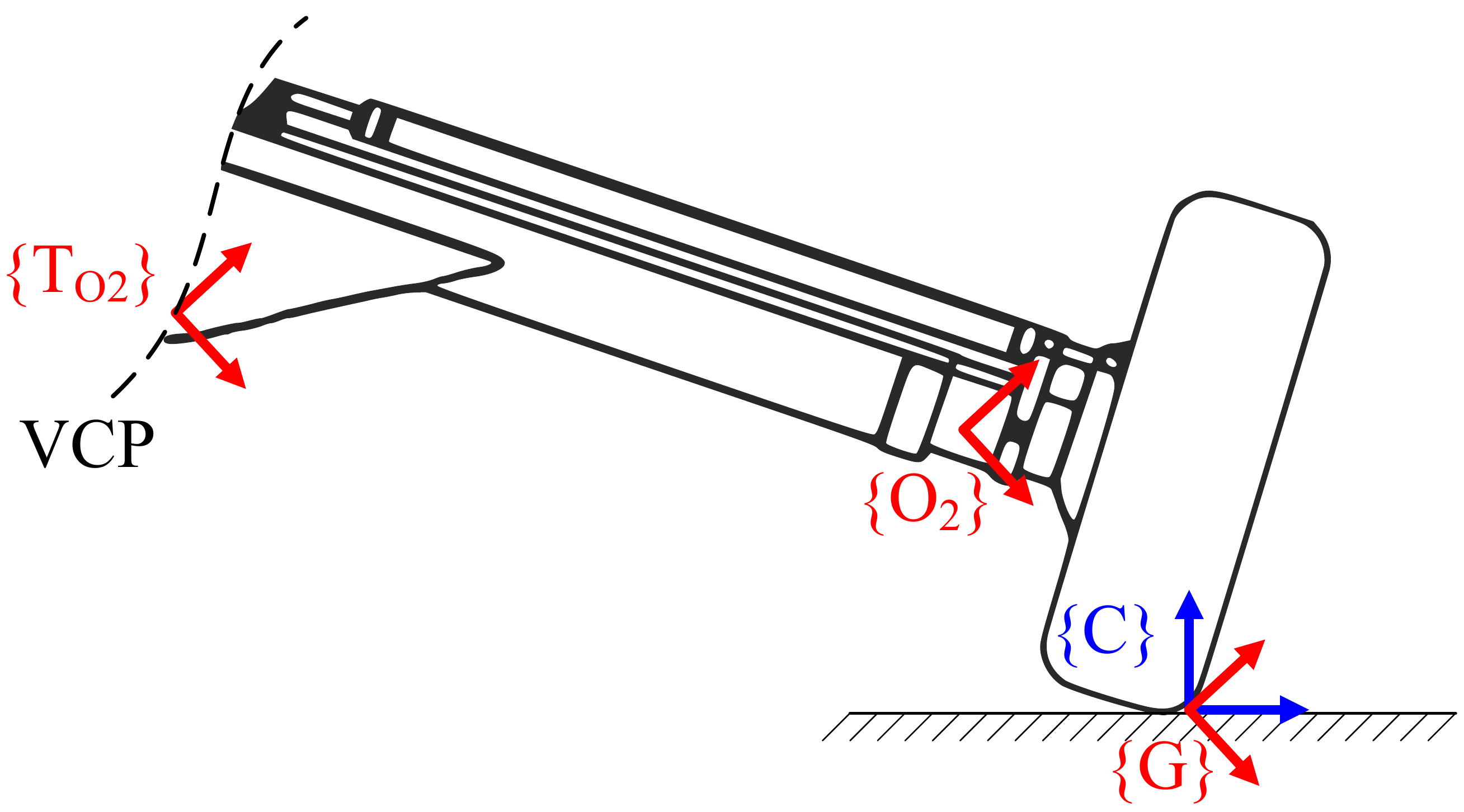}
	\end{center}
	\caption{Contact point with environment.}
	\label{fig:object2} 
\end{figure}
%%%%%%%%%%%%%%%% end figure %%%%%%%%%%%%%%%%%%%%%

\section{Teleoperation}
\label{sec:teleoperation}

After individual velocity-based controllers for both master and slave manipulators (see \eqref{eq:controlofmaster} and \cite{Koivumaki_TRO2015}) of the teleoperation system have been designed, a scheme for connecting the manipulators can be designed. Connection between the two manipulators is made with a \emph{communication channel}~that~virtually connects the manipulators together. This section designs bilateral teleoperation and specifies two design vectors~$\mathcal{V}_{md}$ and $\mathcal{V}_{sd}$, used in \eqref{eq:VmrDef} and \eqref{eq:VsrDef}, respectively. Using position control $ \delta = 1$ in \cite{Zhu2000}, $\mathcal{V}_{md}$ and $\mathcal{V}_{sd}$ can be designed as
\begin{align}
	\mathcal{V}_{md} = & \frac{1}{\kappa_p}\left[ \tilde{\mathcal{V}}_s + \Lambda\left(\tilde{\mathcal{P}}_s - \kappa_p\mathcal{P}_m\right)  -  \mathbf{A}\left(\tilde{\mathbf{f}}_s + \left({\kappa}_f - {\kappa}_p\right) \tilde{\mathbf{f}}_m\right) \right] \label{eq:teleoperationVmd} \\
	\mathcal{V}_{sd} =  & \kappa_p\tilde{\mathcal{V}}_m - \Lambda\left(\mathcal{P}_s - \kappa_p\tilde{\mathcal{P}}_m\right) 
	- \mathbf{A} {\kappa}_f\tilde{\mathbf{f}}_m 
	\label{eq:teleoperationVsd}
\end{align}
where $ \kappa_p > 0 $ and $ \kappa_f > 0 $ are position and force scaling factors \emph{for arbitrary motion/force scaling between the manipulators}, $ \Lambda \in \mathbb{R}^{2 \times 2}$ is a diagonal positive-definite matrix, and $  \mathcal{P}_m \in \mathbb{R}^{2}$ and  $ \mathcal{P}_s \in \mathbb{R}^{2}$ denote the position/orientation of the master and slave manipulator, respectively, subject to $ \dot{\mathcal{P}}_m = \mathcal{V}_m $ and  $ \dot{\mathcal{P}}_s = \mathcal{V}_s $. Furthermore, $ \tilde{\mathcal{V}}_m $, $ \tilde{\mathcal{V}}_s $, $ \tilde{\mathcal{P}}_m $, $ \tilde{\mathcal{P}}_s $, are filtered values of $ {\mathcal{V}}_m $, $ {\mathcal{V}}_s $, $ {\mathcal{P}}_m $, $ {\mathcal{P}}_s $, respectively, obtained %with \eqref{eq:filter}. 
using the following first order filter
\begin{equation}
	\label{eq:filter}
	\dot{\tilde{\mathcal{X}}} + \mathbf{C}\tilde{\mathcal{X}} = \mathbf{C}\mathcal{X}
\end{equation}
where $ \mathcal{X} \in \mathbb{R}^2 $ is the input signal and $ \tilde{ \mathcal { X } } \in \mathbb{R}^2 $ is the filtered signal. The use of filtered variables in the two design vectors, \eqref{eq:teleoperationVsd} and \eqref{eq:teleoperationVmd}, makes the required accelerations $ \dot{\mathcal{V}}_{sd} $ and $ \dot{\mathcal{V}}_{md} $, functions of $ \mathcal{V}_{m} $, $ \mathcal{V}_{s} $, $ \mathbf{f}_{m} $ and $ \mathbf{f}_{s} $.

\subsection{Tracking}

Substituting \eqref{eq:teleoperationVmd} and \eqref{eq:teleoperationVsd} into \eqref{eq:convergenceMaster} and \eqref{eq:convergenceSlave}, then subtracting the resulting error terms from each other yields
\begin{align}
	\nonumber \kappa_p\xi_m-\xi_s = \ &\big(\mathcal{V}_s - \kappa_p\mathcal{V}_m\big) + \Lambda\big(\mathcal{P}_s - \kappa_p\mathcal{P}_m\big) \\
	+ &\big(\tilde{\mathcal{V}}_s - \kappa_p\tilde{\mathcal{V}}_m\big) + \Lambda\big(\tilde{\mathcal{P}}_s - \kappa_p\tilde{\mathcal{P}}_m\big).
	\label{eq:teleopTrackingError}
\end{align}
It can be easily seen that \eqref{eq:teleopTrackingError} can be further written as
\begin{equation}
	\kappa_p\xi_m-\xi_s = \tilde{\mathcal{Z}} + \mathcal{Z}
\end{equation}
where
\begin{equation}
	\label{eq:TeleopError}
	\mathcal{Z} \equiv \big(\mathcal{V}_s - \kappa_p\mathcal{V}_m\big) + \Lambda\big(\mathcal{P}_s - \kappa_p\mathcal{P}_m\big)
\end{equation}
and $ \tilde{\mathcal{Z}} $ is obtained from $ {\mathcal{Z}} $ using \eqref{eq:filter}. 

Following Lemma 2.4 in \cite{Zhu2000} yields
\begin{equation}
	\label{eq:convergenceTeleopAsWhole}
	\mathcal{Z} \in L_2 \bigcap L_\infty.
\end{equation}
Eventually, it follows from \eqref{eq:TeleopError}, \eqref{eq:convergenceTeleopAsWhole} and Lemma 1 in~\cite{Zhu2000}~that
\begin{align}
	\label{eq:xiV}
	\xi_v = \kappa_p\mathcal{V}_m - \mathcal{V}_s \in L_2 \bigcap L_\infty\\
	\label{eq:xiP}
	\xi_p = \kappa_p\mathcal{P}_m - \mathcal{P}_s \in L_2 \bigcap L_\infty
\end{align}
hold, which guarantees the $ L_2 $ and $ L_\infty $ stability of the velocity and position tracking of the teleoperation system.

\subsection{Transparency}

Transparency of the teleoperation system can be analyzed by substituting \eqref{eq:teleoperationVmd} and \eqref{eq:teleoperationVsd} into \eqref{eq:convergenceMaster} and \eqref{eq:convergenceSlave}, then summing the resulting error terms together results in
\begin{align}
	\nonumber
	\kappa_p\xi_m+\xi_s = \ &\big(\tilde{\mathcal{V}}_s - {\mathcal{V}}_s\big) + \kappa_p\big(\tilde{\mathcal{V}}_m - {\mathcal{V}}_m\big) + \Lambda\big(\tilde{\mathcal{P}}_s - {\mathcal{P}}_s\big) \\ 
	&+ \Lambda\kappa_p\big(\tilde{\mathcal{P}}_m - {\mathcal{P}}_m\big) - 2\mathbf{ A }\big(\tilde{\mathbf{f}}_s + \kappa_f{\tilde{\mathbf{f}}}_m\big).
	\label{eq:transparencyErrorv1}
\end{align}
Substituting \eqref{eq:filter} with $ -\mathbf{C}^{-1}\dot{\tilde{\mathcal{X}}} = \left(\tilde{\mathcal{X}} - \mathcal{X}\right) $ into \eqref{eq:transparencyErrorv1} yields
\begin{align}
	\kappa_p\xi_m+\xi_s = \ & -\mathbf{ C }^{-1}\left[ \dot{\tilde{\mathcal{V}}}_s + \kappa_p\dot{\tilde{\mathcal{V}}}_m + \Lambda{\tilde{\mathcal{V}}}_s + \Lambda\kappa_p{\tilde{\mathcal{V}}}_m  \right] - 2\mathbf{ A }\big(\tilde{\mathbf{f}}_s + \kappa_f{\tilde{\mathbf{f}}}_m\big).
	\label{eq:teleopTransparencyError}
\end{align}
According to \cite{Zhu2000}, we can rewrite \eqref{eq:teleopTransparencyError}, using \eqref{eq:xiV}--\eqref{eq:xiP}, as
\begin{equation}
	- \tilde { \mathbf { f } } _ { m } = \kappa _ { f } ^ { - 1 } \tilde { \mathbf { f } } _ { s } + \kappa _ { f } ^ { - 1 } \kappa _ { p } \mathbf { A } ^ { - 1 } {} \mathbf { C } ^ { - 1 } ( s + \Lambda ) \tilde { \mathcal { V } } _ { m } + \frac { \xi } { 2 \kappa _ { f } }
	\label{eq:transparency}
\end{equation}
where $ s $ denotes the Laplace operator, and
\begin{equation}
	\xi \equiv \mathbf{ A } ^ { - 1 } [ - \mathbf{ C } ^ { - 1 } ( s + \Lambda ) \tilde { \xi } _ { v } + ( \xi _ { s } + \kappa _ { p } \xi _ { m } ) ]
\end{equation}
where $ \tilde{ \xi }_v $ is obtained from $ \xi_{ v } $ using \eqref{eq:filter}. In view of Lemma~1 in \cite{Zhu2000}, the following holds true
\begin{equation}
	\xi \in L_2 \bigcap L_\infty.
\end{equation}

Transparency of the teleoperation system can be clearly~seen from \eqref{eq:transparency}. Within a limited frequency range, the filtered signals can be assumed to be approximately equal to their non-filtered counterparts. The last term on the right hand side of \eqref{eq:transparency} is bounded to converge to zero. Then, transparency error can be described by the second term on the right hand side of \eqref{eq:transparency}. It comprises of velocity and acceleration dependent terms. The acceleration related term $  \kappa_f^{-1}{}{\kappa_p}\mathbf{ A }^{-1}\mathbf{ C }^{-1} $ acts as a \textit{virtual mass} on the teleoperation system, while the velocity dependent term $  \kappa_f^{-1}{}{\kappa_p}\mathbf{ A }^{-1}\mathbf{ C }^{-1}\Lambda $ determines the \textit{damping} of the teleoperation system.

\subsection{Stability under time delay}
\label{sec:time delay}

time delay under teleoperation is a much investigated issue especially in space teleoperation. Although the focus of this study is in terrestrial applications, robustness against arbitrary time delay of the proposed method is discussed briefly. In \cite{ZhaiTeleop2018} similar approach was used for longer and varying delays.

Without loss of generality, we consider a one-dimensional system in the stability analysis. The extension to multiple-dimensional systems can be proceeded accordingly. Due to the fact that both master and slave manipulators~have independent stability-guaranteed controllers linked only by the communication channel, the stability under time delay can be analyzed by modifying \eqref{eq:teleoperationVmd} and \eqref{eq:teleoperationVsd}~as
\begin{align}
	\mathcal{V}_{md} &= 
	\kappa_{p}^{-1} \left[ e^{-sT} \left( \tilde{\mathcal{V}}_s + \Lambda \tilde{\mathcal{P}}_s \right) - \kappa_p\Lambda\mathcal{P}_m \right.
	\label{eq:Vmd delay}
- \left. \mathbf{A}\left( e^{-sT} \tilde{\mathbf{f}}_s + \left({\kappa}_f - {\kappa}_p\right) \tilde{\mathbf{f}}_m\right) \right]
	\\
	\label{eq:Vsd delay}
	\mathcal{V}_{sd} &= 
	e^{-sT}\kappa_p \left( \tilde{\mathcal{V}}_m - \Lambda\tilde{\mathcal{P}}_m \right) - \Lambda\mathcal{P}_s
-  e^{-sT}\mathbf{A} {\kappa}_f\tilde{\mathbf{f}}_m
\end{align}
where the communication channel is represented as pure time delay of $ T $; see Fig. \ref{fig:DelayDiagram}. The stability under arbitrary time delay can be analyzed similarly to the method presented in \cite{Zhu2000}. 

Fig.~\ref{fig:DelayDiagram} represents the teleoperation system under arbitrary time delay based on \eqref{eq:Vmd delay} and \eqref{eq:Vsd delay}. In the figure, $Z_h$ is the operator dynamics defined in \eqref{eq:LTImodel} (now considered one-dimensional) and $ Z_e $ is environment dynamics; here approximated with second-order linear dynamics.

To analyze the effect of time delay on the system stability, transfer functions for both sides of the communication channel need to be defined. The transfer function for master side, from input D to output A (see Fig.~\ref{fig:DelayDiagram}) can be formed as
\begin{align}
	G_m 
	%= &\frac{\frac{s}{s+\Lambda} \frac{C}{s+C} \frac{s + \Lambda}{s}}{1 + \frac{s}{s+\Lambda} Z_e \frac{AC}{s+C}} - \frac{\frac{s}{s+\Lambda} Z_e \frac{AC}{s+C}}{1 + \frac{s}{s+\Lambda} Z_e \frac{AC}{s+C}}\\
	= & \frac{\frac{C}{s+C} - \frac{sAC\frac{\kappa_{f}}{\kappa_{p}} Z_h}{\left(s+\Lambda\right)\left(s+C\right) }}{1 + \frac{sAC\frac{\kappa_{f}}{\kappa_{p}} Z_h}{\left(s+\Lambda\right)\left(s+C\right) }}\\
	= & \frac{C\left(s+\Lambda\right)-sAC\frac{\kappa_{f}}{\kappa_{p}}Z_h}{\left(s+\Lambda\right)\left(s+C\right) + sAC\frac{\kappa_{f}}{\kappa_{p}}Z_h}\\
	= & \tfrac{-AC M_h^* s^2 + \left(C - AC D_h^*\right)s + \left(C\Lambda- AC K_h^*\right)}{\left(1 + AC M_h^*\right)s^2 + \left(\Lambda + C + AC D_h^*\right)s + \left(\Lambda C + AC K_h^*\right)}
\end{align} 
where $M_h^* = \frac{\kappa_{ f }}{\kappa_{ p }}{M}_h$, $D_h^* = \frac{\kappa_{ f }}{\kappa_{ p }}{D}_h$ and $K_h^* = \frac{\kappa_{ f }}{\kappa_{ p }}{K}_h$.

Following the same procedure, transfer function from the input B of the slave side to output C can be formed as
\begin{align}
	G_s 
	%= &\frac{\frac{s}{s+\Lambda} \frac{C}{s+C} \frac{s + \Lambda}{s}}{1 + \frac{s}{s+\Lambda} Z_e \frac{AC}{s+C}} - \frac{\frac{s}{s+\Lambda} Z_e \frac{AC}{s+C}}{1 + \frac{s}{s+\Lambda} Z_e \frac{AC}{s+C}}\\
	= & \frac{\frac{C}{s+C} - \frac{sACZ_e}{\left(s+\Lambda\right)\left(s+C\right) }}{1 + \frac{sACZ_e}{\left(s+\Lambda\right)\left(s+C\right)}}\\
	= & \frac{C\left(s+\Lambda\right)-sACZ_e}{\left(s+\Lambda\right)\left(s+C\right) + sACZ_e}.
\end{align}
Assume flexible environment with dynamics as
\begin{equation}
{Z}_e = {M}_e s + {D}_e + \frac{{K}_e}{s}
\end{equation}
where $ {M}_e $, ${D}_e $ and ${K}_e $ define the inertia, damping and stiffness of the environment, respectively.

Then, the transfer function from B to C can be written as
\begin{equation}
	G_s = \tfrac{-AC M_e s^2 + \left(C - AC D_e\right)s + \left(C\Lambda- AC K_e\right)}{\left(1 + AC M_e\right)s^2 + \left(\Lambda + C + AC D_e\right)s + \left(\Lambda C + AC K_e\right)}.
\end{equation}

To guarantee stability under arbitrary time delay, the gain~of each manipulator together with their respective local controllers must remain equal or smaller than one across~the~entire frequency spectrum. Thus, to ensure stability of the entire teleoperation system with arbitrary time delay (see Fig. \ref{fig:DelayDiagram}), the following conditions need to be satisfied
\begin{align}
	\label{eq:cond1}
	&\left\lVert
	\tfrac
	{-AC M_h^*(j\omega)^2 + \iso( {C} - AC D_h^* \iso)(j\omega) + \iso( {C}\Lambda - AC K_h^* \iso)}
	{ \iso(1 + AC M_h^*\iso)(j\omega)^2 + \iso( \Lambda + {C} + AC D_h^* \iso) (j\omega) + \iso( \Lambda {C} + AC K_h^* \iso)}
	\right\rVert_\infty \hspace{-0.05cm} \le \hspace{-0.05cm} 1
	\\
	\label{eq:cond2}
	&\left\lVert
	\tfrac
	{-AC M_e \, (j\omega)^2 + \iso( {C} - AC D_e \, \iso)(j\omega) + \iso( {C}\Lambda - AC K_e \, \iso)}
	{ 	  \iso( 1 + AC M_{e} \iso)(j\omega)^2 
		+ \iso( \Lambda + {C} + AC D_e \iso)(j\omega) 
		+ \iso( \Lambda {C}   + AC K_e \iso) }
	\right\rVert_\infty \le 1.
\end{align}

To satisfy the stability conditions in \eqref{eq:cond1} and \eqref{eq:cond2}, the following relation must be satisfied
\begin{align}
&\left[ \left( \Lambda {C} + {AC}K \right) - \left( 1 + {AC}M \right) \omega^2 \right]^2 + \left[ \left(  \Lambda + {C} + {AC}D \right) \omega \right]^2 \nonumber \\ 
& - \left[ \left( {C} - {AC}D\right) \omega \right]^2 - \left[ \left( \Lambda {C} - {AC}K \right) + {AC}M \omega^2 \right]^2 \ge 0 
\label{APP_EQ1}
\end{align}
for both the slave and master manipulators by substituting $M$, $D$ and $K$ with $M_e$, $D _e$ and $K_e$ (the slave side), or $M_h^*$, $D_h^*$ and $K_h^*$ (the master side), respectively. Furthermore, to satisfy \eqref{APP_EQ1},
\begin{equation}
\mathbf{a}\omega^4 + \mathbf{b}\omega^2 + \mathbf{c} \ge 0 \label{APP_EQ2}
\end{equation}
must hold, where
\begin{equation} \label{APP_EQ3}
\begin{cases} 
\mathbf{a} = 1 + 2 {AC}M	\\
\mathbf{b} = \Lambda^2 + 2AC \left( 2ACD - K - 2 \Lambda {AC} M \right)\\
\mathbf{c} = 4 \Lambda {AC}^2 K.
\end{cases}
\end{equation}

It follows directly from the positive-definite properties of $M$, $D$, $K$, $C$ and $\Lambda$ that $\mathbf{a} \ge 0$ and $\mathbf{c} > 0$ hold indefinitely.~Consequently, it follows from \eqref{APP_EQ1}--\eqref{APP_EQ3} that   
\begin{equation}
\mathbf{b} + 4 C \sqrt{\Lambda A K \left( 2 AC M + 1 \right)} \ge 0
\end{equation}
must hold to fulfill the stability conditions in \eqref{eq:cond1} and \eqref{eq:cond2}.

\begin{figure}[t]
	\centering
	\includegraphics[width=0.75\columnwidth]{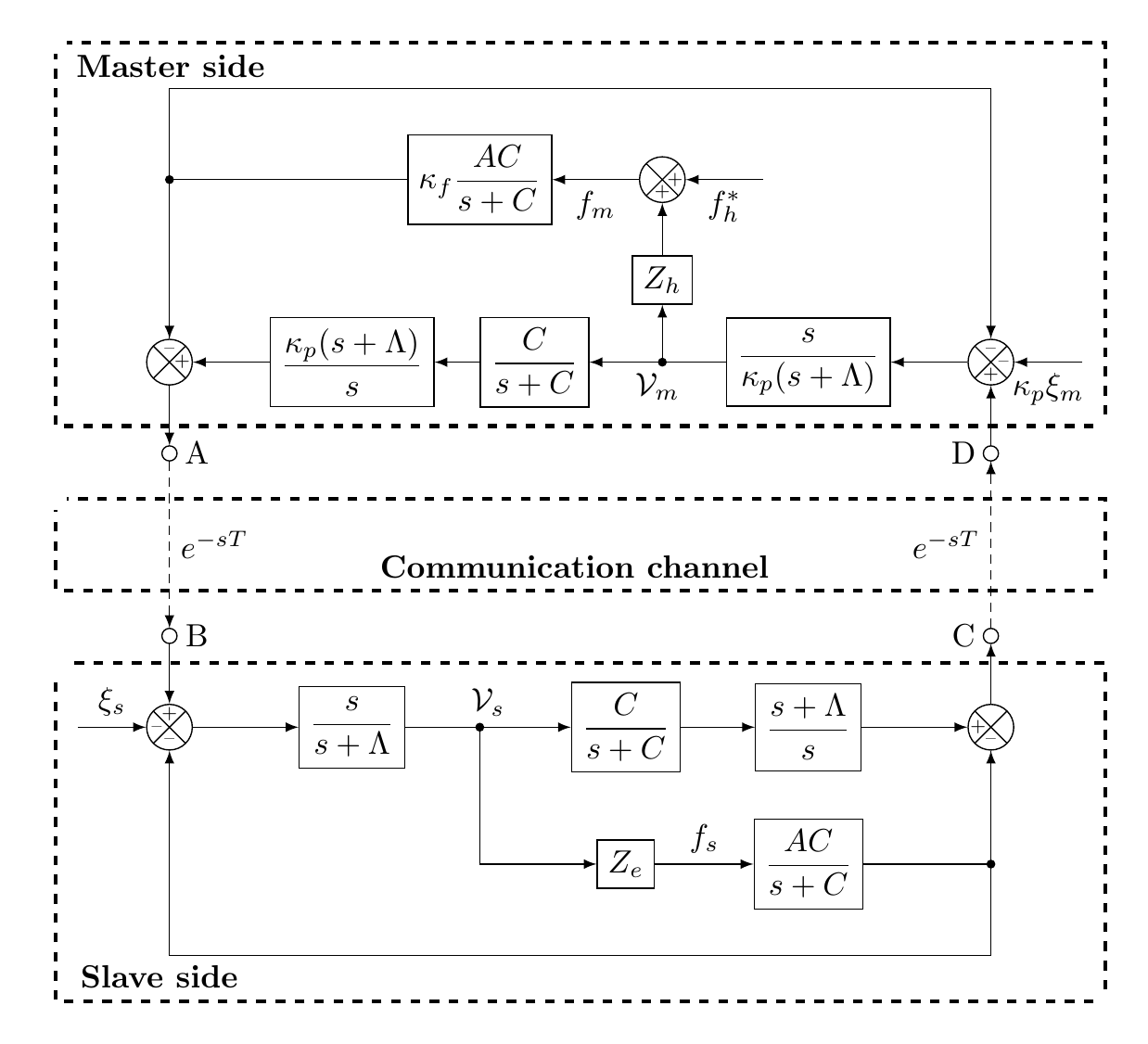}
	\caption{ One-dimensional block diagram representation of the teleoperation system under arbitrary time delay of $ T $. }
	\label{fig:DelayDiagram}
\end{figure}	

\section{Experiments}
\label{sec:Experiments}
This section evaluates the performance of the proposed teleoperation system. First, Section \ref{sec:implementation} addresses the system implementation issues. Then, Section \ref{sec:no_delay} provides the experiments without time delay, followed by the experiment with time delay in Section \ref{sec:delay}.

\subsection{Experiment description}
\label{sec:implementation}
The experimental~implementation comprises four main components, visualised in Fig.~\ref{fig:ExpImpl}, which are the electric master manipulator (Phantom Premium 6DOF/3.0L haptic device), the host computer for the master manipulator, the real-time computer and the hydraulic slave manipulator (HIAB-031 manipulator). The two-DOF hydraulic manipulator (in Fig. \ref{fig:setup}) has a maximum reach of approximately 3.2~m, and a payload of 475~kg is attached to its tip. For the real-time control system, the following components were used: a DS1005 processor board, a DS3001 incremental encoder board, a DS2103 DAC board, a DS2003 ADC board, and a DS4504 100 Mb/s ethernet interface. The remaining hardware implementations can be found in \cite{Koivumaki_TRO2015}, \cite{KoivumakiTMECH2017} or \cite{LampinenCASE}. Control computations have been run with 500 Hz frequency. The communication channel parameters of each experiment are shown in Table \ref{table:parameters}.

\begin{table}[t]
	\centering
	\caption{Used communication channel parameters.}
	\label{table:parameters}
	\begin{tabular}{l c c c c c}
		{} & {$\kappa_{ p } $} & {$ \kappa_{ f } $} & {$ \Lambda $} & {$ \mathbf{A} $} & {$ \mathbf{C} $}  \\[0.2em]
		\hline \\[-0.9em]
		{Experiment 1:}& {1} & {300} & {2.0} & {60$\times 10^{-6}$} & {35}  \\
		{Experiment 2:}& {4} & {800} & {2.0} & {100$\times 10^{-6}$} & {35}   \\
		{Experiment 3:}& {1.5} & {500} & {1.5} & {40$\times 10^{-6}$} & {35}  
	\end{tabular}
\end{table}

\begin{figure}[b]
	\begin{center}
		\includegraphics[width=0.6\columnwidth]{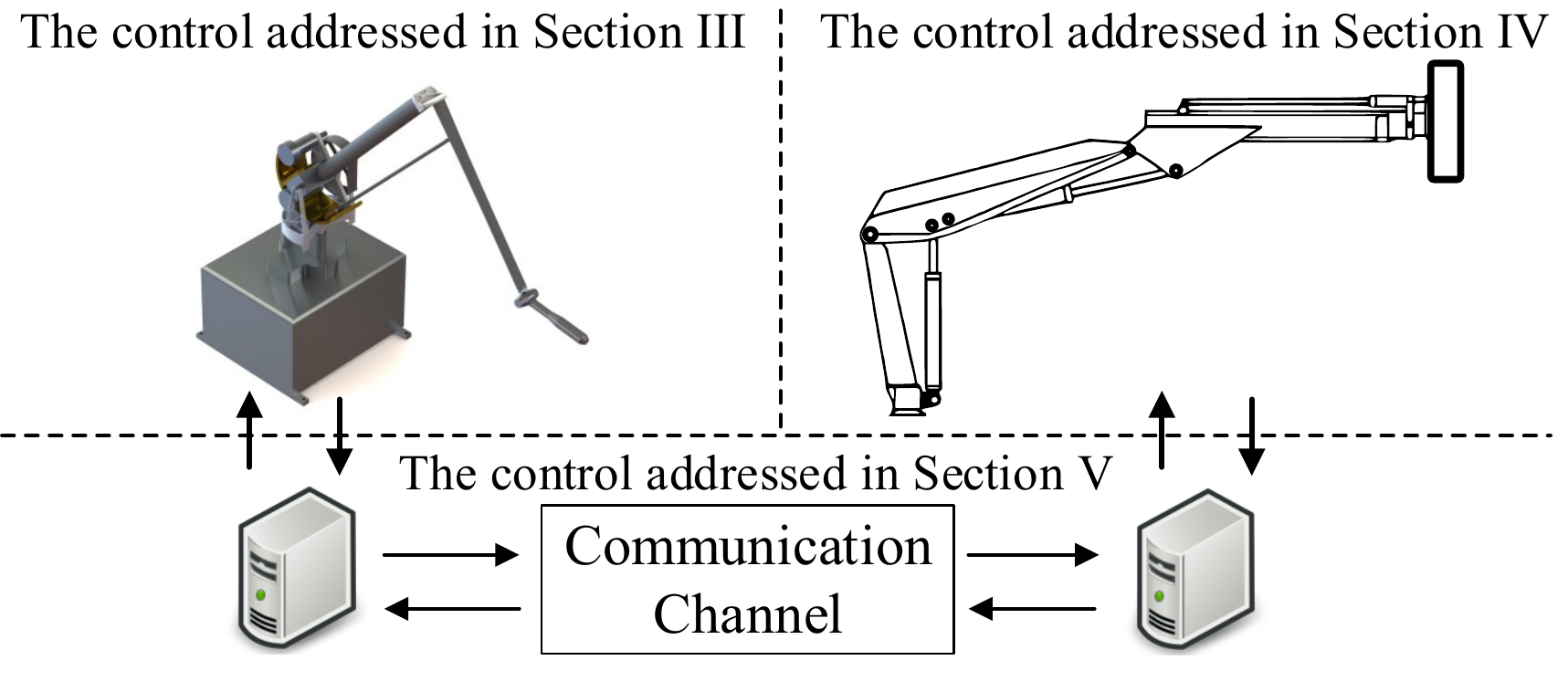}
	\end{center}
	\caption{High-level overview of the experimental implementation.}
	\label{fig:ExpImpl}
\end{figure}

\begin{figure}[t]
	\begin{center}
		\includegraphics[width=0.75\columnwidth]{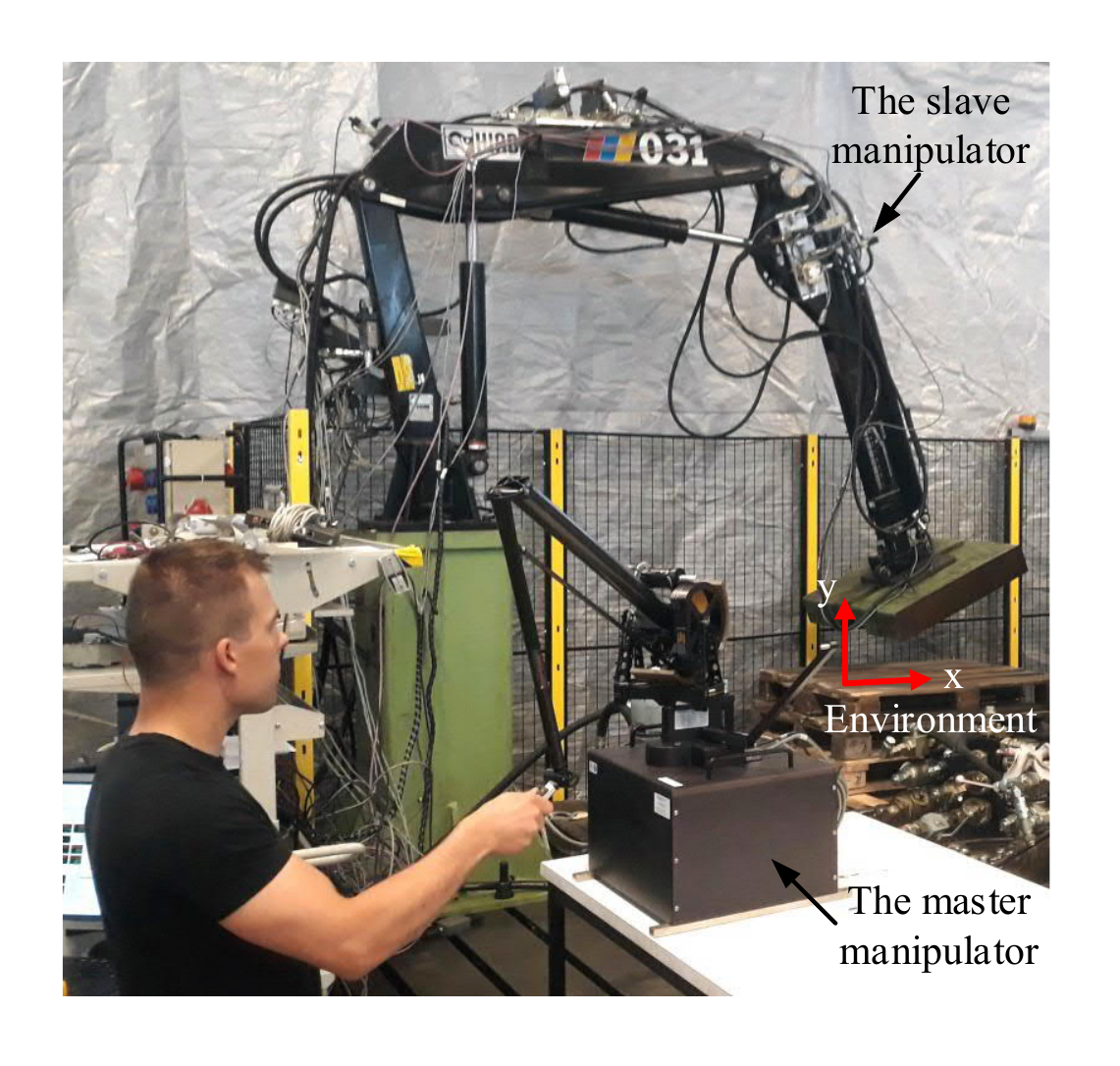}
	\end{center}
	\caption{Experimental implementation and setup.}
	\label{fig:setup}
\end{figure}

\begin{figure*}[t]
	\centering
	\includegraphics[width=1\textwidth]{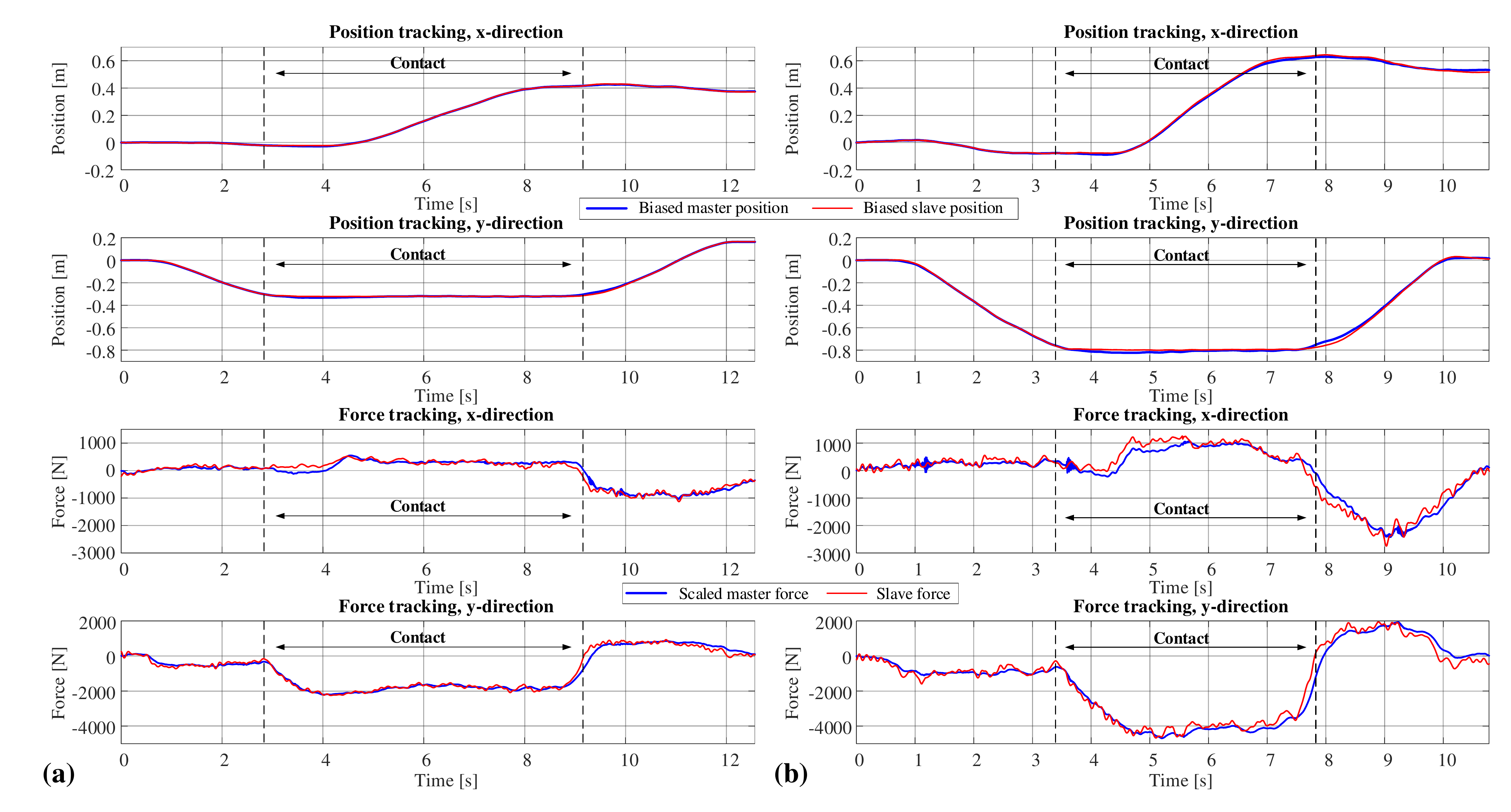}
	\caption{Results from experimental implementation with bilateral force-reflected teleoperation. In the experiments, two different set of scaling parameters in the communication system were used as: a) $ \kappa_{p} = 1.0 $ and $ \kappa_{ f } = 300 $, and b) $ \kappa_{p} = 4.0 $ and $ \kappa_{ f } = 800 $. Scaling of the axes of the figures is kept the same between the two experiments to elaborate the effect of the scaling parameters to the behavior of the system.}
	\label{fig:tracking}
\end{figure*}

\begin{figure}[h!]
	\centering
	\includegraphics[width=0.8\columnwidth]{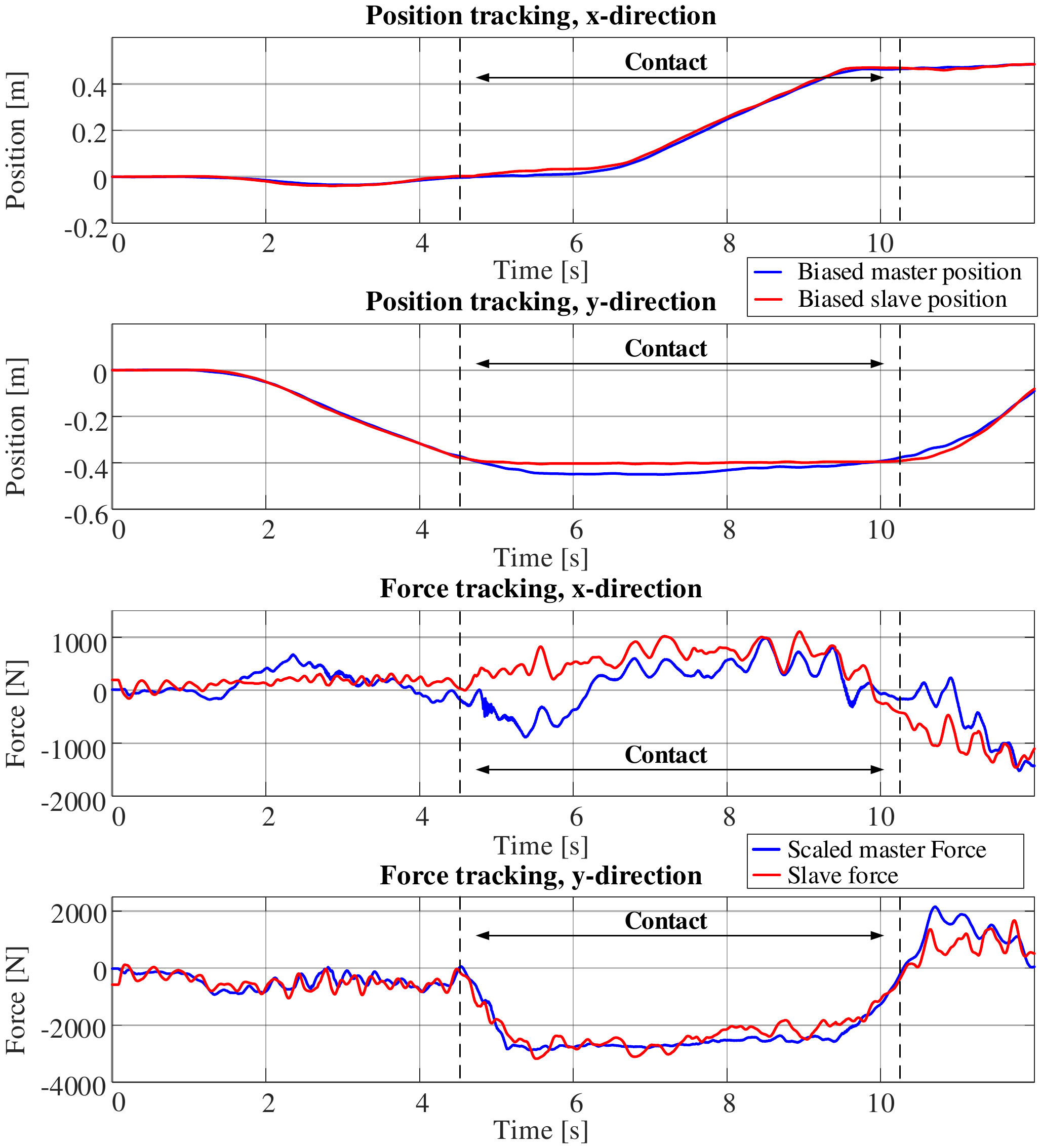}
	\caption{ Teleoperation task under one-way time delay of 80ms. Parameters of the communication channel were set as: $ \kappa_{p} = 1.5 $ and $ \kappa_{ f } = 500 $, $ \Lambda = 1.5 $, $ \mathbf{C} = 35 $ and $ \mathbf{A} = 40\times 10^{-6}$. }
	\label{fig:delay}
\end{figure}

Teleoperation control between the master and slave manipulators was engaged by pressing a pushbutton and disengaged~by releasing the button. At first, the slave manipulator was driven from free space to contact with the environment along the Cartesian y-axis; see Fig.~\ref{fig:setup} for the directions of the Cartesian coordinate system. After contact with the environment was established, the slave manipulator was driven along the surface of the pallets, while maintaining constant force against the environment. Finally, the slave manipulator was driven back to free space after approximately 0.5~m of sliding against the wooden pallets. The above described task was repeated \textit{without any time delay} with two different sets of motion/force scaling parameters ($\kappa_{ p }$ and $\kappa_{ f }$) of the communication system; see Fig. \ref{fig:tracking}. Then, an experiment \textit{with 80~ms one-way time delay} in the communication channel was performed; see Fig. \ref{fig:delay}. In Figs. \ref{fig:tracking} and \ref{fig:delay}, the master manipulator data is shown in blue, and the slave manipulator data in red. 

\subsection{The experiments without time delay}
\label{sec:no_delay}

In the \textit{first experiment} (Fig. \ref{fig:tracking}a), 1:1 position mapping was used between the manipulators ($\kappa_{ p } = 1$), while forces of the master manipulator were scaled up by a factor of $\kappa_{ f } = 300$; see Table \ref{table:parameters}. In the \textit{second experiment} (Fig. \ref{fig:tracking}b), 4:1 position scaling between the master and slave manipulator was used ($ \kappa_{ p } = 4 $), yielding 4 times larger movement of the slave manipulator compared to the movement of the master manipulator, along with force scaling by a factor of $ \kappa_{ f } = 800 $; see Table \ref{table:parameters}. 

As Fig. \ref{fig:tracking} shows, accurate position (see the \textit{first}~and~\textit{second rows}) and force tracking (the \textit{third} and \textit{fourth rows}) between the master and slave manipulators is achieved with different scaling parameters, as predicted by the theory, despite~the~inherent challenges of force control of hydraulic manipulator in the teleoperation system (see the discussion in Section \ref{sec:intro}). The forces in third and fourth rows presents the \textit{estimated} contact forces of the slave- and master manipulator along the x- and y-axes, respectively. In the results, the master manipulator forces are scaled up by the respective scaling factor. 
Note that the slave manipulator contact forces are estimated from the cylinders' chamber pressures. Thus, some inaccuracies can exist in the measured contact forces (see \cite{KoivumakiTMECH2017} for more details). However, the operator is still able to effectively sense the contact forces between the slave and the environment, and excessive contact forces can be prevented.~It~is~valid~to mention that the proposed force-sensorless approach provides a practical solution for teleoperation of extremely powerful hydraulic manipulators, as conventional six-DOF force/torque sensors are fragile and prone to overloading \cite{Koivumaki_TRO2015}.

\begin{remark}
	Note that in the second experiment the transition from free space motion to contact motion, was done rapidly with a velocity of approximately 0.2~m/s. This is to demonstrate the stable behavior of the control system even with high velocity and rapid changes of system states.
\end{remark}

\subsection{The experiment with one-way time delay of 80 ms}
\label{sec:delay}

Fig.~\ref{fig:delay} shows respective results as in Fig. \ref{fig:tracking} with one-way time delay of 80 ms. To demonstrate the versatility, in the experiment the scaling factors were selected as $\kappa_{p} = 1.5$ and $\kappa_{f} = 500$. As the results indicate, the proposed method (\textit{i}) is robust against time delays and communication noises (as predicted~by~the theory in Section \ref{sec:time delay}) and (\textit{ii})~is capable of handling the delay without significant loss~of~performance.

\section{Conclusions}
\label{sec:Conclusions}
This study presented force-reflected bilateral teleoperation using asymmetric electrical master and hydraulic slave manipulators. The control design takes advantage of the VDC approach, which allows us to design local subsystem-dynamics-based controllers for the master and slave manipulators independently. Then, the teleoperation system was completed by designing the communication channel between the master and slave controllers as motivated by \cite{Zhu2000}. The teleoperation scheme provided unique features, such as arbitrary motion/force scaling between the manipulators, effectively enabling the connection of two very dissimilar manipulators.

The experimental results demonstrated the performance of the proposed method and showed excellent motion and force tracking between the manipulators. Furthermore,~robustness against an arbitrary time delay was demonstrated theoretically and experimentally. Similar to our previous studies \cite{Koivumaki_TRO2015,KoivumakiTMECH2017}, and \cite{Koivumaki_ASME2017}, tracking performance improvements can be expected after rigorous application of a full parameter adaptation implementation and with tuning of the system parameters.

This paper advances force-reflected bilateral teleoperation control one more step toward practical applications and implements novel features to make it applicable to a large class of manipulators remotely operated over 5G cellular network. The theoretical and experimental studies in this paper concluded that the use of a force-sensor-less design for bilateral teleoperation is feasible. 

Future work will focus on maximizing the system performances of position tracking and transparency, as well as forming basis functions for human operator exogenous forces through machine learning. Moreover, we intend to focus on expanding the experimental system to possess more degrees of freedom with the goal of achieving 6-DOF manipulation.

\appendix
%%%%%%%%%%%%%%%%%%%%%%%%%%%%%%%%%%%%%%%%%%%%%%%%%%%%%%%%%%%%%%%%%%%%%%%%%%%%%%%%%%%%%%%%%%%%%%%%%%%%%%%%%%%%%%%
\section{$L_2$ and $L_{\infty}$ Stability}
\label{App:Lebesgue_Stab}

Definition~\ref{def:Lspace} provides a definition for the Lebesgue space.  

\begin{definition}[\hspace{-0.10cm} \cite{Zhu2010Virtual}] \label{def:Lspace}
The Lebesgue space, denoted as $L_p$ with $p$ being a positive integer, contains all Lebesgue measurable and integrable functions $f(t)$ subject to
\begin{equation}
		\|f\|_p = \lim_{T\to\infty}\left[\int\limits_0^T|f(t)|^pd\tau\right]^\frac{1}{p}<+\infty.
\end{equation}
Two particular cases are considered:
\begin{enumerate}
  \item[(a)] A Lebesgue measurable function $f(t)$ belongs to $L_2$ if and only if \\ $\ \lim_{T\to\infty}\int_0^T|f(t)|^2d\tau < +\infty$.
  \item[(b)] A Lebesgue measurable function $f(t)$ belongs to $L_\infty$ if and only if \\$\ {\rm max}_{t\in[0,\infty)}|f(t)| < +\infty$.
\end{enumerate}
\end{definition}

Lemma~\ref{lem:stab} (a simplified version of Lemma 2.3 in~\cite{Zhu2010Virtual})~provides that a system is $L_2$ and $L_\infty$ stable with its affiliated~vector $\mathbf{x}(t)$, being a function in $L_\infty$ and its affiliated vector~$\mathbf{y}(t)$, being a function in $L_2$. 
\begin{lemma}[\hspace{-0.1cm} \cite{Zhu2010Virtual}] \label{lem:stab}
Consider a non-negative differentiable~function $\xi(t)$ defined as
\begin{equation}
		\xi(t) \geqslant \frac{1}{2}\mathbf{x}(t)^T\mathbf{P}\mathbf{x}(t) 
\end{equation}
with $\mathbf{x}(t) \in \mathbb{R}^n$, $n \geqslant 1$, and $\mathbf{P} \in \mathbb{R}^{n\times n}$ being a symmetric positive-definite matrix. If the time derivative of $\xi(t)$ is Lebesgue integrable and governed by
\begin{equation}
		\dot{\xi}(t) \leqslant - \mathbf{y}(t)^T\mathbf{Q}\mathbf{y}(t) - s(t) 
\end{equation}
where $\mathbf{y}(t) \in \mathbb{R}^m$, $m \geqslant 1$, and $\mathbf{Q} \in \mathbb{R}^{m\times m}$ being a symmetric positive-definite matrix, and $s(t)$ is subject to
\begin{equation}
		 \int_0^\infty s(t)dt \geqslant - \gamma_0 \label{EQ_s(t)} 
\end{equation}
with $0 \leqslant \gamma_0 < \infty$, then, it follows that $\xi(t) \in L_{\infty}$, $\mathbf{x}(t) \in L_{\infty}$ and $\mathbf{y}(t) \in L_{2}$ hold.
\end{lemma}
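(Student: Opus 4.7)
The plan is to treat this as a standard comparison/Lyapunov argument: integrate the differential inequality in time, then exploit non-negativity of $\xi(t)$ together with positive-definiteness of $\mathbf{P}$ and $\mathbf{Q}$ to extract the three conclusions.

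First I would integrate the hypothesis $\dot{\xi}(t) \leqslant -\mathbf{y}(t)^T\mathbf{Q}\mathbf{y}(t) - s(t)$ from $0$ to an arbitrary $T>0$. Since $\dot\xi$ is assumed Lebesgue integrable, the fundamental theorem of calculus (applied to the absolutely continuous majorant) gives
\begin{equation*}
\xi(T) - \xi(0) \;\leqslant\; -\int_0^T \mathbf{y}(\tau)^T \mathbf{Q}\, \mathbf{y}(\tau)\, d\tau \;-\; \int_0^T s(\tau)\, d\tau .
\end{equation*}
Rearranging and using $\xi(T) \geqslant 0$, I obtain the two bookkeeping inequalities
\begin{equation*}
\xi(T) \;\leqslant\; \xi(0) - \int_0^T s(\tau)\, d\tau, \qquad \int_0^T \mathbf{y}^T \mathbf{Q}\, \mathbf{y}\, d\tau \;\leqslant\; \xi(0) - \int_0^T s(\tau)\, d\tau .
\end{equation*}

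Next I would invoke the lower-bound hypothesis \eqref{EQ_s(t)} in the form $\int_0^T s(\tau)\, d\tau \geqslant -\gamma_0$ valid for every $T$ (this being the operative reading of the condition, as guaranteed by how $s(t)$ is produced by virtual power-flow cancellation in VDC). Substituting gives
\begin{equation*}
\xi(T) \;\leqslant\; \xi(0) + \gamma_0, \qquad \int_0^T \mathbf{y}^T\mathbf{Q}\,\mathbf{y}\, d\tau \;\leqslant\; \xi(0) + \gamma_0,
\end{equation*}
uniformly in $T$. The first inequality, holding for every $T\geqslant 0$ with a finite right-hand side, yields $\xi \in L_\infty$. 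Letting $T \to \infty$ in the second and using $\mathbf{y}^T\mathbf{Q}\mathbf{y} \geqslant \lambda_{\min}(\mathbf{Q})\|\mathbf{y}\|^2$ with $\lambda_{\min}(\mathbf{Q})>0$ delivers $\int_0^\infty \|\mathbf{y}(\tau)\|^2\, d\tau < \infty$, i.e.\ $\mathbf{y} \in L_2$.

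Finally, for $\mathbf{x} \in L_\infty$ I would chain the quadratic lower bound $\xi(t) \geqslant \tfrac{1}{2}\mathbf{x}(t)^T\mathbf{P}\mathbf{x}(t) \geqslant \tfrac{1}{2}\lambda_{\min}(\mathbf{P})\|\mathbf{x}(t)\|^2$ with $\xi \in L_\infty$, giving $\|\mathbf{x}(t)\|^2 \leqslant 2(\xi(0)+\gamma_0)/\lambda_{\min}(\mathbf{P})$ for all $t$. The only delicate step, and the one I expect to be the main obstacle to a fully watertight write-up, is the uniform-in-$T$ reading of \eqref{EQ_s(t)}: the stated hypothesis is phrased as an improper integral from $0$ to $\infty$, but the argument above requires the partial integrals $\int_0^T s\, d\tau$ to be bounded below by $-\gamma_0$ for every finite $T$. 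In the VDC context this is exactly how the condition is used (the accumulated VPFs never drift arbitrarily negative), so the proof goes through verbatim; otherwise one would need a supplementary monotonicity or boundedness assumption on the partial integrals of $s$.
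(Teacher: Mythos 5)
Your proof is correct and is exactly the standard argument: the paper itself does not prove this lemma but imports it verbatim from \cite{Zhu2010Virtual} (as a simplified version of Lemma 2.3 there), and the proof in that reference proceeds just as you do --- integrate the differential inequality, drop the non-negative term to bound $\xi(T)$, use $\xi(T)\geqslant 0$ to bound $\int_0^T \mathbf{y}^T\mathbf{Q}\mathbf{y}\,d\tau$, and then pass through $\lambda_{\min}(\mathbf{P})$ and $\lambda_{\min}(\mathbf{Q})$. Your closing observation is also the right one to flag: the hypothesis must be read as a uniform lower bound $\int_0^T s(\tau)\,d\tau \geqslant -\gamma_0$ for all finite $T$ (which is how the VDC framework produces $s(t)$, cf.\ the bound \eqref{STAB_O2_3} established in Appendix C), since the improper-integral phrasing alone would not prevent the partial integrals from dipping below $-\gamma_0$ and breaking the uniform bound on $\xi$.
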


Lemma~\ref{lem:Stab_asymp} provides an alternative to Barbalat's lemma.
\begin{lemma}[\hspace{-0.2cm} \cite{Tao1997}] \label{lem:Stab_asymp}
 If $e(t) \in L_2$ and $\dot{e}(t) \in L_\infty$, then $\displaystyle{\lim_{t \to \infty} e(t) = 0}$.
\end{lemma}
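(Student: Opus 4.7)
My plan is to argue by contradiction. Suppose $e(t)$ does not tend to zero; then there must exist some $\epsilon > 0$ and a strictly increasing sequence $t_n \to \infty$ with $|e(t_n)| \ge \epsilon$ for every $n$. I will combine this with the hypothesis $\dot e \in L_\infty$ to force a non-vanishing contribution to $\int_0^\infty |e|^2\,dt$ in a neighborhood of each $t_n$, which will contradict $e \in L_2$.

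Concretely, since $\dot e \in L_\infty$ there exists $M$ with $|\dot e(\tau)| \le M$ for almost every $\tau$. The fundamental theorem of calculus then gives
\[
    |e(t) - e(t_n)| \;=\; \left| \int_{t_n}^{t} \dot e(\tau)\,d\tau \right| \;\le\; M\,|t - t_n|
\]
for every $t$, so $e$ is globally Lipschitz. Setting $\delta = \epsilon/(2M)$, this forces $|e(t)| \ge \epsilon/2$ on the window $I_n = [t_n - \delta,\, t_n + \delta]$, and hence
\[
    \int_{I_n} |e(t)|^2\,dt \;\ge\; \left(\tfrac{\epsilon}{2}\right)^{\!2}(2\delta) \;=\; \tfrac{\epsilon^3}{4M}.
\]
The next step is to pass to a subsequence satisfying $t_{n+1} > t_n + 2\delta$, which is possible since $t_n \to \infty$; this renders the windows $I_n$ pairwise disjoint. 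Then $\int_0^\infty |e(t)|^2\,dt$ is bounded below by an infinite sum of a fixed positive constant and must diverge, contradicting $e \in L_2$.

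I expect no genuine technical obstacle; the only point requiring minor care is that $\dot e$ is only essentially bounded, so the pointwise bound holds almost everywhere rather than everywhere, but this still validates the Lipschitz integral estimate above. The rest is a straightforward greedy selection of the separated subsequence. Conceptually, the result is a quantitative form of Barbalat's lemma in which the hypothesis $\dot e \in L_\infty$ supplies the uniform continuity of $e$ that prevents $|e|$ from dropping too fast near the times $t_n$ where it is known to be large.
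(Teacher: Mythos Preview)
Your argument is correct and is essentially the standard proof of this Barbalat-type result: bounded derivative gives Lipschitz continuity, which forces $|e|$ to stay above $\epsilon/2$ on fixed-width windows around the points $t_n$, and disjointness of those windows then forces $\|e\|_{L_2}=\infty$. The minor technical points you flag (a.e.\ boundedness of $\dot e$, the need to thin the sequence so the windows are disjoint) are handled correctly; you may also want to remark that one can assume $M>0$ (otherwise $e$ is constant and $e\in L_2$ forces $e\equiv 0$), and that for small $t_n$ the left half of $I_n$ might spill past $t=0$, which is harmless after discarding finitely many terms.

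As for comparison with the paper: there is nothing to compare against. The paper does not prove Lemma~\ref{lem:Stab_asymp}; it simply quotes it from \cite{Tao1997} as an alternative to Barbalat's lemma and invokes it in Remark~\ref{remark:lebsque_stab}. Your write-up therefore supplies a self-contained proof where the paper relies on a citation, and the route you take is exactly the classical one underlying Tao's note.
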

\begin{remark} \label{remark:lebsque_stab}
As a distinction to Lyapunov approaches, Lemma~\ref{lem:stab} allows different appearances of variables in the non-negative function itself and in its time-derivative. When all error signals are proven to belong to $L_2$ and $L_\infty$ in the sense of Lemma~\ref{lem:stab}, then asymptotic stability can be proven with Lemma~\ref{lem:Stab_asymp}, if the time-derivatives of all error signals belong~to~$L_\infty$. Note that $s(t) = 0$ is a special case that satisfies \eqref{EQ_s(t)} in Lemma \ref{lem:stab}.
\end{remark}

%%%%%%%%%%%%%%%%%%%%%%%%%%%%%%%%%%%%%%%%%%%%%%%%%%%%%%%%%%%%%%%%%%%%%%%%%%%%%%%%%%%%%%%%%%%%%%%%%%%%%%%%%%%%%%%
\section{Virtual Stability}
\label{App:Virtual_Stab}

The unique feature of the VDC approach is the introduction of a scalar term, namely the \textit{virtual power flow} (VPF)~\cite{Zhu2010Virtual}; see Definition~\ref{def:VPF}. The VPFs uniquely define the dynamic interactions among the subsystems and play an important role in the definition of \textit{virtual stability}~\cite{Zhu2010Virtual}, which is defined in a simplified form in Definition~\ref{def:Vstab}.
\begin{definition}[\hspace{-0.15cm} \cite{Zhu2010Virtual}] \label{def:VPF}
The \textit{virtual power flow} with respect to frame $\{{\mathbf A}\}$ is the inner product of the linear/angular velocity vector error and the force/moment vector error as
\begin{equation}
p_{\mathbf{A}} = ({}^{{\mathbf A}}V_{\rm r}-{}^{{\mathbf A}}V)^{T}({}^{{\mathbf A}}F_{\rm r}-{}^{{\mathbf A}}F) \label{EQ_VPF}
\end{equation}
where ${}^{{\mathbf A}}V_{\rm r} \in \mathbb{R}^{6}$ and ${}^{{\mathbf A}}F_{\rm r} \in \mathbb{R}^{6}$ represent the required vectors of ${}^{{\mathbf A}}V \in \mathbb{R}^{6}$ and ${}^{{\mathbf A}}F \in \mathbb{R}^{6}$, respectively.
\end{definition}

\begin{definition}[\hspace{-0.10cm} \cite{Zhu2010Virtual}] \label{def:Vstab}
A subsystem with a \textit{driven} VCP to which frame $\{\mathbf A\}$ is attached and a \textit{driving} VCP to which frame $\{\mathbf C\}$ is attached is said to be virtually stable with its affiliated vector $\mathbf{x}(t)$ being a virtual function in $L_{\infty}$ and its affiliated vector $\mathbf{y}(t)$ being a virtual function in $L_{2}$, if and only if there exists a non-negative accompanying function
\begin{equation}
{\nu (t)} \geqslant \frac{1}{2} \mathbf{x}(t)^{T}\mathbf{P}\mathbf{x}(t)
\label{EQ8}
\end{equation}
such that
\begin{equation}
{\dot\nu (t)} \leqslant -\mathbf{y}(t)^{T}\mathbf{Q}\mathbf{y}(t) - s(t) + p_{\mathbf{A}} - p_{\mathbf{C}}
\label{EQ9}
\end{equation}
holds, %subject to
\begin{equation}
\int_0^\infty s(t)d\tau \geqslant -\gamma_s \label{EQ10}
\end{equation}
where $0 \leqslant \gamma_0 < \infty$, $\mathbf{P}$ and $\mathbf{Q}$ are two block-diagonal positive-definite matrices, and $p_{\mathbf{A}}$ and $p_{\mathbf{C}}$ denote the \textit{virtual power flows} (by Definition~\ref{def:VPF}) at frames $\{\mathbf A\}$ and $\{\mathbf C\}$, respectively.
\end{definition}
\begin{remark} \label{remark:overall_stab}
In view of Theorem 2.1 in~\cite{Zhu2010Virtual}, when all subsystems qualify as \textit{virtually stable} (in the sense of Definition~\ref{def:Vstab}), the $L_2$ and $L_\infty$ stability of the entire system can be guaranteed in the sense Lemma~\ref{lem:stab}.
\end{remark}

\section{The Proof of Theorem \ref{thm:O2}}
\label{proof:thm_O2}

According to the Definition \ref{def:VPF} and \eqref{eq:fs}, \eqref{eq:CV}, \eqref{EQ_GF}, \eqref{eq:fsr}, \eqref{eq:CVr} and \eqref{EQ_GFr} if follows~that
\begin{align}
	p_{\mathbf G} &= ({}^{{\mathbf G}}V_{\rm r} - {}^{{\mathbf G}}V)^T({}^{{\mathbf G}}F_{\rm r} - {}^{{\mathbf G}}F) \nonumber \\
	& = 
	({}^{{\mathbf G}}V_{\rm r} - {}^{{\mathbf G}}V)^T {\rm diag}({}^{\mathbf G}{\mathbf R}_{\mathbf C},{}^{\mathbf G}{\mathbf R}_{\mathbf C}) \nonumber \\
	& \phantom{ = } \times 
	\left[
		\begin{smallmatrix}
			\mathbf{ I }_{2\times2} & \boldsymbol{0}_{2\times4}
		\end{smallmatrix}
	\right]^T
	\sigma_f \left( \mathbf{f}_{sr} - \mathbf{f}_{s} \right) \nonumber\\
	& =
	\big({}^{{\mathbf G}}V_{\rm r} - {}^{{\mathbf G}}V\big)^T {\rm diag}({}^{\mathbf G}{\mathbf R}_{\mathbf C},{}^{\mathbf G}{\mathbf R}_{\mathbf C}) \nonumber \\
	& \phantom{ = } \times 
	\left[
		\begin{smallmatrix}
			\mathbf{ I }_{2\times2} & \boldsymbol{0}_{2\times4}
		\end{smallmatrix}
	\right]^T
	\sigma_f \mathbf{M}_e \big( \dot{\mathcal{V}}_{sr} - \dot{\mathcal{V}}_{s} \big) \nonumber\\
	& =
	\big({}^{{\mathbf C}}V_{\rm r} - {}^{{\mathbf C}}V\big)^T 
	\left[
	\begin{smallmatrix}
	\mathbf{ I }_{2\times2} & \boldsymbol{0}_{2\times4}
	\end{smallmatrix}
	\right]^T
	\sigma_f \mathbf{M}_e \big( \dot{\mathcal{V}}_{sr} - \dot{\mathcal{V}}_{s} \big) \nonumber \\
	& =
	\sigma_f \big( {\mathcal{V}}_{sr} - {\mathcal{V}}_{s} \big)^T 
	\mathbf{M}_e \big( \dot{\mathcal{V}}_{sr} - \dot{\mathcal{V}}_{s} \big).
\end{align}
For constant value of $ \sigma_f $ the following holds true
\begin{align}
	\label{eq:slaveVirtualStbaility}
	\int_{0}^{\infty} p_{\mathbf G} dt
	&=\int_{0}^{\infty} \sigma_f \big( {\mathcal{V}}_{sr} - {\mathcal{V}}_{s} \big)^T 
	\mathbf{M}_e \big( \dot{\mathcal{V}}_{sr} - \dot{\mathcal{V}}_{s} \big) dt \\
	&\ge - \frac{1}{2} \sigma_f \big( {\mathcal{V}}_{sr}(0) - {\mathcal{V}}_{s}(0) \big)^T 
	\mathbf{M}_e \big( {\mathcal{V}}_{sr}(0) - {\mathcal{V}}_{s}(0) \big) \nonumber
\end{align}

\bibliographystyle{IEEEtran}
\bibliography{bibliography}

\end{document}